%% file: MV.tex
\newlength{\bibsep}
\documentclass[3p, preprint, review, nonatbib, nopreprintline]{elsarticle}
\input{MinhPreamble.tex}
\input{TikzPreamble.tex}

\begin{document}
\sloppy
%
\begin{frontmatter}
\title{Sparse Gaussian-Mixture-Model Q-Functions via\\
Hadamard Overparametrization for Online Reinforcement
Learning}
%
\author{Minh Vu}
\author{Konstantinos Slavakis}
\address{Institute of Science Tokyo, Department of
    Information and Communications Engineering, Yokohama,
    Japan. 
    \\\textnormal{Emails: \texttt{vu.d.a5c3@m.isct.ac.jp,
    slavakis@ict.eng.isct.ac.jp}}
}

\begin{abstract}
    This paper develops an online, off-policy
    policy-iteration framework for reinforcement learning
    (RL), based on sparse Gaussian-mixture-model Q-functions
    (S-GMM-QFs). The framework reconciles streaming,
    non-stationary data with the Riemannian structure of the
    parameter space while handling distributional mismatch
    through experience replay. S-GMM-QFs are introduced via
    Hadamard overparametrization, enabling interpretable
    sparsification through smooth regularization that
    facilitates Riemannian-based optimization.
    Overparametrization allows the framework to adaptively
    identify meaningful components from a large initial
    pool, yielding sparse models where interpretability
    emerges naturally from geometry: each component's
    parameters (means and covariances) explicitly encode its
    geometric role in the ambient state-action space. These
    geometric roles are learned through online gradient
    descent on a smooth objective over a (Cartesian-product)
    Riemannian manifold. Numerical tests demonstrate that
    S-GMM-QFs match or exceed deep RL methods while using
    substantially fewer parameters and achieving faster
    improvement per observed transition. Notably, parameter
    efficiency and interpretability combine to maintain
    strong generalization in low-parameter regimes where
    sparsified deep RL approaches degrade.
\end{abstract}

\begin{keyword}
  Reinforcement learning, online, Gaussian mixture model, manifold, sparse modeling.
\end{keyword}

\end{frontmatter}

\section{Introduction} \label{sec:intro}
Reinforcement learning (RL) is a machine-learning framework in which an agent learns an optimal policy by
interacting with its environment to maximize expected cumulative rewards~\cite{Bertsekas:RLandOC:19,
  Sutton:IntroRL:18}. RL typically models the environment as a Markov decision process (MDP), providing a
rigorous mathematical framework for sequential decision-making problems arising across diverse domain,
such as robotics, wireless communications, data mining, and large language model training.

A key concept in RL is the \textit{Q-function,} which estimates the expected cumulative reward after the
agent takes an action in a given state under a specific policy. Classical approaches like Q-learning
by~\cite{watkins92Qlearning} and SARSA by~\cite{singh00sarsa} use tabular representations of Q-functions,
computing values for all possible state-action pairs. While effective for discrete-space problems, these
methods become impractical for large or continuous state-action spaces. To mitigate this limitation,
significant attention has been devoted to the development of RL algorithms that leverage models
(typically non-linear) to approximate Q-functions.

Approximation models for Q-functions have a long history in
RL. Kernel-based (KB)RL, first introduced
by~\cite{ormoneit02kernel} and the extended variants
in~\cite{ormoneit:autom:02, bae:mlsp:11} models
Q-functions in Banach spaces of bounded functions. Methods
developed via temporal difference (TD)~\cite{sutton88td} such as
\cite{xu07klspi, lagoudakis03lspi, regularizedpi:16}, or
Bellman residual (BR)~\cite{onlineBRloss:16}, and more
recent nonparametric approaches in~\cite{vu23rl, akiyama24proximal, akiyama24nonparametric} represent them in reproducing
kernel Hilbert spaces (RKHSs)~\cite{aronszajn50kernels,
scholkopf2002learning}, thereby exploiting the underlying
geometric structure and computational efficiency afforded by
the reproducing inner product.
A notable drawback of nonparametric (kernel)
approaches, however, is that their models typically expand with the amount of data, 
which can result in significant memory and computational
overhead, particularly with
nonstationary data distributions, therefore limiting
scalability in online settings. Sparsification via
approximate linear independency as in~\cite{xu07klspi,
vu23rl} can alleviate this issue,
but often at the cost of degraded accuracy in the resulting
Q-function estimates. A comprehensive review of KBRL,
least-square (LS)TD, and BR methods, along with their
connections to RKHSs, is provided
by~\cite{akiyama24nonparametric}.

Nonparametric setting could also be found in distributional
RL, a prominent and increasingly influential approach in
RL, such as in~\cite{sato99em, agostini17gmmrl,
choi19distRL, mannor05rl, bellemare17distRL,QRDQN}. In these studies, Q-functions are
usually treated as (statistics of) random variables (RVs).
For instance, \cite{mannor05rl, bellemare17distRL} assumes that samples of
Q-values/functions are jointly Gaussian, an assumption that
when combined with classical least-squares and Gauss-Markov
theory, leads to Kalman-type algorithmic solutions. To
enable more expressive probabilistic modeling,
Gaussian-mixture-models (GMMs)~\cite{Reynolds:GMM:19} have
been widely adopted to approximate either the joint
probability density function (PDF)
$p(Q,\vect{s}, a)$---in which Q-value $Q$, state $\vect{s}$
and action $a$ are all modeled as observations of RVs---or
the conditional $p(Q\given \vect{s}, a)$. This conventional
use of GMMs, which also evolve together with number of
observed data, closely linked to maximum likelihood
estimation, naturally motivates the adoption of
expectation-maximization (EM) procedures, as
in~\cite{sato99em, agostini17gmmrl, mannor05rl}. As such,
distributional RL estimates $Q$ indirectly as a statistical
byproduct of the modeled PDF, usually as the mean of the
conditional $p(Q\given \vect{s}, a)$ in EM-based solutions.


On the other hand, 
usage of deep neural networks as parametric functional
approximators for Q-function, exemplified by
deep Q-networks (DQNs)~\cite{mnih13dqn, hasselt16ddqn, duelingddqn},
provides strong representational power by nature and avoids
the model-growth issues of the aforementioned nonparametric
approaches, albeit at the cost of requiring many complicated
architectures with large number of learnable parameters.
Typically, DQNs
learn from past experience~\cite{lin93experience}, gathered
from previously employed policies and stored in a
\textit{replay buffer},
enabling exploration beyond the current policy. A widely
adopted refinement of this replay mechanism is prioritized
experience replay (PER)~\cite{PER}, which samples
transitions preferentially according to their
temporal-difference (TD) error rather than uniformly, so
that transitions carrying more informative learning signal
are replayed more frequently. While DQNs
offer a practical and powerful means of training RL agents,
they exhibit limitations in several key scenarios. In
particular, deployment in an online setting requires swift
adaptation to new data, which becomes challenging due to the
large number of learnable parameters within the deep
networks. Frequent model update with new data (re-training),
although theoretically possible within simulation scope, is
computationally expensive and requires significant hardware
resources, contradicting lightweight and rapid adaptability
requirements of online learning.
However, these usually large ``black-box'' networks are
vulnerable to sudden changes in dynamic environments and
provide little insight into the features influencing agent
decisions, limiting interpretability of learned models at
hand. 

To reduce the size of deep models, sparsification techniques
are used, following the taxonomy and implementations
benchmarked by~\cite{sparseDRL}. Dense-to-sparse (pruning)
approaches~\cite{dense2sparse} train a dense network as usual
while, over an initial portion of training, progressively
removing its smallest-magnitude connections according to a
fixed schedule until a target sparsity level is reached,
after which the resulting sparse network is trained to
convergence with its connectivity held fixed; because it
starts from and trains a full dense network before
sparsifying, pruning is known to achieve state-of-the-art
performance, but requires computational resources comparable
to (or exceeding) those of ordinary dense training. Sparse
training approaches, by contrast, fix a sparsity pattern from
the start and dynamically adjust connections throughout
training. Sparse evolutionary training
(SET)~\cite{mocanu17scalable} periodically drops a fraction of
the weakest connections and replaces them with an equal
number of new, randomly placed ones, with this drop fraction
annealed via a cosine decay schedule~\cite{cosine}; RigL~\cite{rigl}
follows the same procedure but regrows connections
using the gradient signal rather than at random, so that new
connections are placed where they are expected to reduce the
loss the most. However, even for these sparsified or pruned
models, interpretability remains limited, since none of them
establishes an explicit link between a surviving connection
and the mathematical structure or geometry of the input
space.

Searching for more expressive power in approximate RL,
\cite{Vu:eusipco:25, minh25tsp} introduce Gaussian-mixture-model
Q-functions (GMM-QFs) to represent Q-functions as weighted
sums of multivariate Gaussian components with learnable
weights, mean vectors and covariance matrices.
Unlike distributional RL that uses GMMs to model
PDFs, GMM-QFs are used directly as a parametric
functional approximator of Q-functions. As a result, they
are not constrained by
mixture weights summing to one, nor do they rely on the assumptions
of Gaussianity on observed data, as typically required in
standard distributional RL approaches.
The number of mixture components is user-specified, allowing control over model
complexity and mitigating the curse of dimensionality in
nonparametric approaches. In practice, however, identifying
an appropriate number of components a priori is challenging:
the ideal model complexity is generally unknown and varies
with the learning task, so that an overly small number of
components limits representational capacity, while an overly
large one increases the risk of overfitting and incurs
unnecessary computational overhead. \cite{Vu:eusipco:25, minh25tsp}
employ GMM-QFs within an offline, on-policy policy-iteration (PI) framework via
BR, exploiting Riemannian
geometry of the parameter space.

In offline (batch) RL---the most typical RL scenario---the agent is trained entirely from a static,
pre-collected dataset. In \textit{online RL,} by contrast, the agent learns from sequentially arriving
transitions (states, actions, rewards) while interacting with the environment; training and data
collection occur simultaneously. Online RL is preferable when the environment is non-stationary and
resources are limited---typical conditions in real-world decision-making~\cite{onlineRL22}. This paper
addresses the core challenges of online RL: developing a lightweight, interpretable model that updates
continually as streaming transitions arrive, while handling distributional shift as the policy evolves.

Building upon this foundation, the present work emphasizes
the following key contributions.
\begin{itemize}
\item[\textbf{(C1)}] GMM-QFs are extended into an \textit{online}\/ and \textit{off-policy}\/ PI
  framework that learns from streaming data, while simultaneously constructing an experience buffer.
  Unlike the offline, on-policy setting of~\cite{Vu:eusipco:25, minh25tsp}, this extension reconciles the
  sequential, non-stationary nature of streaming data with the Riemannian-manifold structure of the
  parameter space, and actively counteracts the distributional mismatch introduced by learning off-policy
  from a replay buffer rather than from freshly sampled on-policy data. This buffer, inspired
  by~\cite{lin93experience}, is actively exploited to enhance both exploration and mitigate biased nature
  of on-policy approaches.  Furthermore, several structures of experience buffer are studied to evaluate
  how distribution of experiences affect the decision making process of the proposed Q-functional
  classes.

\item[\textbf{(C2)}] To improve scalability and interpretability of original GMM-QFs in real-world
  systems, a novel class of sparse (S-)GMM-QFs is introduced via Hadamard overparametrization.  Standard
  sparsification approaches penalize mixture weights via non-smooth $\ell_p$-norms ($p \in [0,
    1]$), which are ill-suited to Riemannian optimization. Hadamard overparametrization instead enables
  sparsification through a smooth, differentiable regularizer compatible with Riemannian-based
  optimization while preserving geometric constraints. Overparametrization also enhances representational
  capacity: S-GMM-QFs begin with a large pool of components and adaptively identify only the meaningful
  ones through sparsity, allowing components to evolve in response to task structure rather than being
  fixed \textit{a priori.} The resulting sparse mixtures are inherently interpretable: each surviving
  component is defined by explicit geometric parameters (means and covariances) rather than hidden
  representations, enabling principled analysis of the learned structure without post-hoc attribution
  methods.  Learning proceeds via online gradient descent on a smooth objective over the (product)
  Riemannian manifold. The approach contrasts with Deep RL sparsification, which targets network
  architectures and provides little transparent connection between internal representations and learned
  structure.
  %
\end{itemize}
Numerical experiments on standard benchmark control tasks demonstrate that S-GMM-QFs match or surpass the
performance of DeepRL models using substantially more parameters, with faster improvement per observed
transition, and that this advantage persists in low-parameter-count regimes where sparsified DeepRL
methods degrade substantially.

This manuscript is organized as follows.
\cref{sec:background} reviews the basics of RL and establishes
the notation used throughout the paper. \cref{sec:model}
introduces the original GMM-QFs together with the online policy
iteration (PI) framework. The extended class of S-GMM-QFs,
along with the Hadamard overparametrization, is introduced in
\cref{sec:sparse-gmmq}. The Riemannian optimization approach for
the proposed online PI is detailed in \cref{sec:riemannian},
while the structure of the experience buffer is presented in
\cref{sec:er}. Numerical experiments on standard benchmark
control tasks are reported in \cref{sec:tests}. Finally,
\cref{sec:conclusion} provides concluding remarks and outlines
directions for future research. This manuscript also constitutes
a substantial extension of the short conference paper
\cite{minh26sparse}.

\section{Preliminaries: Background on
RL}\label{sec:background}

Let $\mathfrak{S} \subset \Real^{D_s}$ denote a \textit{continuous}\/ state space, with state vector
$\vect{s} \in \mathfrak{S}$, for some $D_s \in \IntegerPP$ ($\IntegerPP$ denotes the set of all positive
integers). The discrete action space is denoted by $\mathfrak{A}$, with action $a \in \mathfrak{A}$. For
convenience, cardinality of action space $N_a \coloneqq \lvert \mathfrak{A} \rvert < \infty$. An agent at
state $\vect{s} \in \mathfrak{S}$ takes action $a \in \mathfrak{A}$ and transits to a new state
$\vect{s}'\in \mathfrak{S}$ under an unknown transition probability $p(\vect{s}' \given \vect{s}, a)$
with a reward $r(\vect{s}, a)$. The Q-function $Q(\cdot, \cdot) \colon \mathfrak{S} \times \mathfrak{A}
\to \Real \colon (\vect{s}, a) \mapsto Q(\vect{s}, a)$ stands for the long-term cumulative reward
achievable if the agent selects action $a$ in state $\vect{s}$. Following~\cite{Bertsekas:RLandOC:19}, a
(deterministic) policy $\mu(\cdot)$ maps a state to an action, as in $\mu(\cdot)\colon \mathfrak{S} \to
\mathfrak{A} \colon \vect{s} \mapsto \mu(\vect{s})$. Denote also the set of all mappings from
$\mathfrak{S}$ to $\mathfrak{A}$ by $\mathcal{M}$. Let also $\overline{1,N} \coloneqq \{ 1, \ldots, N\}$.

\begin{figure}[t]
    \centering
    \includegraphics[width=.75\linewidth]{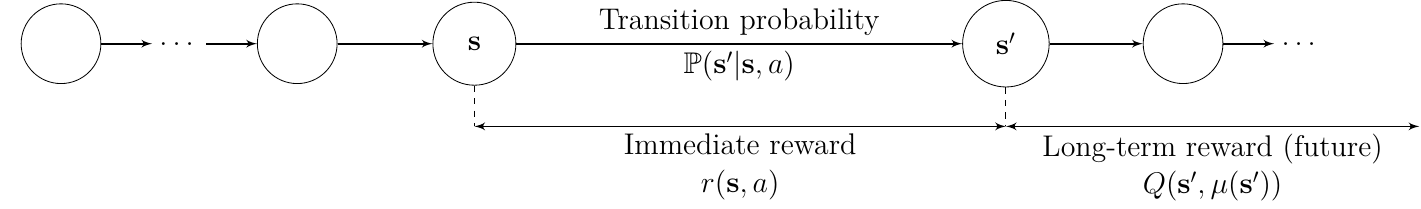}
    \caption{RL as a sequential decision-making process: at
    state $\vect{s}$, the agent takes decision/action $a
    \coloneqq \mu(\vect{s})$, receives a reward
    $r(\vect{s},a)$ and moves to the next state
    $\vect{s}'$, under some transition probability.}
    \label{fig:DP}
\end{figure}

The Q-function is determined by the Bellman
mapping~\cite{Bertsekas:RLandOC:19} through relationship
between immediate rewards and the discounted future values
(see~\cref{fig:DP}).
More precisely, when Q-functions are drawn from the functional space
$\Banach$---typically the Banach space of essentially bounded functions~\cite{Bertsekas:RLandOC:19}---the (classical)
Bellman mapping $\Bellman^{\diamond}_{\mu} \colon \Banach \to \Banach \colon Q \mapsto \Bellman^{\diamond}_{\mu} Q$ for
a policy $\mu(\cdot)$ is defined as: $\forall (\vect{s}, a)$,
\begin{align}
  (\Bellman_{\mu}^{\diamond} Q)(\vect{s}, a)
  & \coloneqq r( \vect{s}, a ) + \alpha \mathbb{E}_{\vect{s}^{\prime} \given
    (\vect{s}, a)} \bigl[ Q(\vect{s}^{\prime}, \mu(\vect{s}^{\prime}))
    \bigr]\,, \label{Bellman.standard.mu}
\end{align}
where $\mathbb{E}_{\vect{s}' \given (\vect{s}, a)} [\cdot]$ is the conditional expectation operator with respect to the
next state $\vect{s}'$ conditioned on $(\vect{s}, a)$ under
the (unknown) transition probability $\mathbb{P}(\vect{s}'
\given \vect{s}, a)$, and $\alpha \in [0, 1)$ being the discount factor. A greedy
  version of \eqref{Bellman.standard.mu} is the Bellman mapping $\Bellman^{\diamond} \colon \Banach \to \Banach \colon Q
  \mapsto (\Bellman^{\diamond} Q)(\vect{s}, a) \coloneqq r( \vect{s}, a ) + \alpha \mathbb{E}_{\vect{s}^{\prime} \given
    (\vect{s}, a)} [ \max_{a^{\prime}\in \mathfrak{A}} Q(\vect{s}^{\prime}, a^{\prime}) ]$.

The fixed-point set of $\Bellman^{\diamond}_{\mu}$ is defined as $\Fix \Bellman^{\diamond}_{\mu} \coloneqq \Set{Q \in
  \Banach \given Q = \Bellman^{\diamond}_{\mu} Q}$. It is well-known that identifying a fixed point $Q^{\diamond}_{\mu}
\in \Fix \Bellman^{\diamond}_{\mu}$ plays a central role in computing optimal policies that maximize cumulative
rewards. Usually, $\alpha \in [0, 1)$ to assure that \eqref{Bellman.standard.mu} becomes a strict contraction, hence
  $\Fix\Bellman^{\diamond}_{\mu}$ is a
  singleton~\cite{Bertsekas:RLandOC:19, hb.plc.book}.

Policy iteration (PI) is a popular framework in
RL~\cite{Bertsekas:RLandOC:19}. PI comprises of two
stages per iteration $n \in \IntegerPP$: \textit{policy
evaluation} and \textit{policy improvement}. In particular,
given a policy $\mu_n$ at iteration $n$, policy evaluation
computes a Q-function $Q_n$ that ``closely'' approximates,
in an appropriate sense, $Q^{\diamond}_{\mu_n} \in
\Fix\Bellman^{\diamond}_{\mu_n}$, while policy improvement
updates the policy according to the following greedy rule:
\begin{equation}\label{eq:policy.impr}
    \mu_{n+1}(\vect{s}) \coloneqq \Argmax_{a \in
    \mathfrak{A}} Q_n(\vect{s}, a)\,, \quad \forall \vect{s}
    \in \mathfrak{S} \,.
\end{equation}
PI iterates this process to generate a sequence of policies
and Q-functions as in $\mu_0 \to Q_0 \to \mu_1 \to Q_1 \to
\dots$, with the goal that the resulting sequence
$(Q_n)_{n=0}^{\infty}$ converges to $Q^{\diamond} \in \Fix
\Bellman^{\diamond}$. The \textit{optimal policy} then is
defined as $\mu^{\diamond}(\vect{s}) \coloneqq \Argmax_{a
\in \mathfrak{A}} Q^{\diamond}(\vect{s}, a)\,, \forall
\vect{s} \in \mathfrak{S}$. 
In the online setting studied in this paper, this iteration
index $n$ coincides with the discrete time step at which a
new transition of streaming data becomes available to the
agent, so that a single PI iteration is carried out per
incoming transition; see~\cref{sec:online-PI} for details.


\section{Gaussian-mixture-model Q-functions for online RL}\label{sec:model}

\subsection{Gaussian-mixture-model Q-functions and parameter
space}\label{subsec:gmmq}
State space in real-world is usually high-dimensional
or/and even continuous, inflicting significant burdens for
tabular methods, where Q-functions must be exactly computed
at \textit{every} $\vect{s} \in \mathfrak{S}$ to identify
the policies---for example, \eqref{eq:policy.impr}. To
address this limitation, functional approximation models for
Q-functions, such as kernel~\cite{ormoneit02kernel,
akiyama24proximal, xu07klspi} and deep neural
networks~\cite{mnih13dqn, PPO} have been attracting research
interests as pivotal alternatives to traditional tabular
methods, introducing \textit{approximate} (A)PI through
their involvement at policy-evaluation stage. A discussion on
prior functional approximation methods for Q-function is
given in~\cite{minh25tsp}.

This paper adopts Gaussian-mixture-model Q-functions
(GMM-QFs), a parametric functional-approximation class introduced
in~\cite{Vu:eusipco:25, minh25tsp}, as a novel \textit{parametric}
model for Q-functions. This formulation provides a
fixed-size model, addressing the growing complexity issue in
nonparametric kernel-based RL.
Extended from the original GMM-QFs in~\cite{minh25tsp}, and for a
user-defined $K\in\IntegerPP$, GMM-QFs are defined as the
following class of functionals:
\begin{align}
  \mathcal{Q}_K \coloneqq \Bigl \{
    Q \colon \mathfrak{S} \times \mathfrak{A} \to \Real \colon (\vect{s}, a) \mapsto Q(\vect{s}, a) 
    \coloneqq \sum_{k=1}^{K} \xi_k(a)
    \mathscr{G}(\vect{s} \given \vect{m}_k, \vect{C}_k)
    \mathop{} \Big \vert\, \xi_k(a) \in \Real, \vect{m}_k \in \Real^{D_s}, \vect{C}_k \in \PD^{D_s}
  \Bigl \} \,, \label{eq:GMMQF}
\end{align}
where the Gaussian component $\mathscr{G}(\vect{s} \given \vect{m}_k, \vect{C}_k) \coloneqq \mathscr{G}_k (\vect{s}) \coloneqq \exp[ -(\vect{s}
  - \vect{m}_k)^{\intercal} \vect{C}_k^{-1} (\vect{s} - \vect{m}_k)]$, and $\PD^{D_s}$ denotes the set of all $D_s
\times D_s$ positive definite matrices, while $\intercal$
stands for vector/matrix transposition.  

This formulation extends the original GMM-QFs, by separating
the model into bins according to the available actions in
$\mathfrak{A}$, allowing a more robust approximation and
liberating it from any specific user-defined state-action
combinations. Similar to~\cite{minh25tsp}, class of GMM-QFs
in~\eqref{eq:GMMQF}
also possesses the \textit{universal approximation property}.
\begin{proposition}\label{prop:dense}
  The union $\cup_{K=1}^{\infty} \mathcal{Q}_K$ is dense in the space of all square-(Lebesgue)-integrable functions on
  $\mathfrak{S} \times \mathfrak{A}$.
\end{proposition}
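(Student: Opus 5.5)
Since $\mathfrak{A}$ is finite with $N_a$ elements, the measure on $\mathfrak{S}\times\mathfrak{A}$ is the product of Lebesgue measure on $\mathfrak{S}$ and counting measure on $\mathfrak{A}$. Hence a function $f$ is square integrable on $\mathfrak{S}\times\mathfrak{A}$ exactly when each slice $f(\cdot,a)$ lies in $L^2(\mathfrak{S})$, and
\begin{align*}
\lVert f\rVert^2=\sum_{a\in\mathfrak{A}}\lVert f(\cdot,a)\rVert_{L^2(\mathfrak{S})}^2 .
\end{align*}
The plan is therefore to reduce the statement to a single-action density result. Suppose that for every $g\in L^2(\mathfrak{S})$ and every $\epsilon>0$ there is a finite sum $\sum_{k}c_k\mathscr{G}(\cdot\given\vect{m}_k,\vect{C}_k)$ within $\epsilon/\sqrt{N_a}$ of $g$. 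Then, given $f$, we approximate each slice $f(\cdot,a)$ by such a sum with $K_a$ components. We concatenate the component lists over all $a$, giving $K=\sum_a K_a$ components. For each component, set $\xi_k(a)$ equal to its coefficient when the component belongs to the list for action $a$, and $\xi_k(a)=0$ for the other actions. The resulting $Q$ lies in $\mathcal{Q}_K$, and by the identity above $\lVert Q-f\rVert\le\epsilon$. (Alternatively, since $\cup_K\mathcal{Q}_K$ is a linear space, one can simply add the per-action approximants.)

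The core step is the density in $L^2(\mathfrak{S})$ (with $\mathfrak{S}\subset\Real^{D_s}$, and if needed extending functions by zero to $\Real^{D_s}$) of the span of the Gaussians. It already suffices to use the isotropic Gaussians $\exp(-\lVert\vect{s}-\vect{m}\rVert^2/\sigma^2)$, obtained with $\vect{C}=\sigma^2\vect{I}$. I would first invoke the result this paper cites from~\cite{minh25tsp}, where universal approximation of GMM-QFs was established; the present proposition is essentially a per-action copy of it. Should a self-contained argument be required, I would use the following two-step approach.

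First, $C_c(\Real^{D_s})$ is dense in $L^2$, so it suffices to approximate a continuous compactly supported $g$. Consider the normalized kernel $\phi_\sigma(\vect{x})=(\pi\sigma^2)^{-D_s/2}\exp(-\lVert\vect{x}\rVert^2/\sigma^2)$, which is an approximate identity. Hence $g*\phi_\sigma\to g$ in $L^2$ as $\sigma\to0$. Second, the convolution integral
\begin{align*}
(g*\phi_\sigma)(\vect{s})=\int g(\vect{m})\,\phi_\sigma(\vect{s}-\vect{m})\,d\vect{m}
\end{align*}
has a compactly supported continuous integrand, so it is approximated by Riemann sums $\sum_j g(\vect{m}_j)\phi_\sigma(\vect{s}-\vect{m}_j)\Delta$. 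These sums are exactly finite Gaussian mixtures with means $\vect{m}_j$ and covariance $\sigma^2\vect{I}$.

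The main obstacle is controlling this Riemann-sum approximation in the $L^2$ norm over the unbounded domain, since uniform convergence on compacts does not suffice on its own. I would handle the tails with a Gaussian-decay estimate. For $\vect{s}$ far from $\mathrm{supp}\,g$, both the integral and the sums are bounded by $C\exp(-\mathrm{dist}(\vect{s},\mathrm{supp}\,g)^2/\sigma^2)$ uniformly in the partition, and this bound is square integrable. Uniform convergence of the Riemann sums on a large ball, together with dominated convergence on its complement, then gives $L^2$ convergence. A cleaner fallback is a Hilbert-space duality argument. If $h\in L^2$ is orthogonal to every translate $\phi_\sigma(\cdot-\vect{m})$, then $h*\phi_\sigma\equiv0$. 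Taking Fourier transforms gives $\hat h\,\hat\phi_\sigma=0$, and since $\hat\phi_\sigma$ is a Gaussian with no zeros, $\hat h=0$ and hence $h=0$. The closed span of the translates is therefore all of $L^2$.
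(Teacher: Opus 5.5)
Your proof is correct, but it follows a different route from the paper. The paper gives no argument of its own: it only says the proof follows verbatim that of \cite[Theorem 4(iii)]{minh25tsp}, which was written for the earlier GMM-QF class whose mixture weights are not indexed by the action. Your proposal adds two things.

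First, you make explicit the reduction $L^2(\mathfrak{S}\times\mathfrak{A})\cong L^2(\mathfrak{S})^{N_a}$ under Lebesgue-times-counting measure. You combine it with the concatenation device of setting $\xi_k(a)=0$ for the actions a component does not serve. This is exactly the adaptation that ``verbatim'' leaves to the reader.

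Second, you give a self-contained proof of the single-action density. One version uses an approximate identity plus Riemann sums, where your uniform Gaussian tail bound correctly upgrades uniform convergence to $L^2$ convergence. The other, more economical version is the Wiener-type duality argument, which uses only that $\hat\phi_\sigma$ never vanishes.

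The citation buys brevity and consistency with the companion paper. Your version buys independence from it and a visibly stronger conclusion. By the duality argument, translates of one fixed isotropic Gaussian ($\vect{C}_k=\sigma^2\vect{I}$ for all $k$, with any single $\sigma>0$) already give density in each action slice. So the learnable covariances, and with them the Riemannian geometry of $\PD^{D_s}$, play no role in the expressivity of $\cup_K\mathcal{Q}_K$; they matter only for parsimony at a fixed $K$.
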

\begin{proof}
    The proof follows verbatim the proof of~\cite[Theorem
    4(iii)]{minh25tsp}.
\end{proof}

Let now the $D_s \times K$ matrix $\vect{M} \coloneqq [\vect{m}_1, \dots, \vect{m}_K]$. Then the
learnable parameters of~\eqref{eq:GMMQF} can be collected as $\vectgr{\Omega} \coloneqq (\vectgr{\Xi},
\vect{M}, \vect{C}_1, \dots, \vect{C}_K)$, where each $\vectgr{\Xi} \in \Real^{K \times N_a}$ (recall
$N_a = |\mathfrak{A}|$) is defined entry-wise by $[\vectgr{\Xi}]_{k,a} \coloneqq \xi_{k}(a)$. It is shown
in~\cite{minh25tsp} that, each $\vectgr{\Omega}$ specifies a single GMM-QFs in
$\mathcal{Q}_K$. Altogether, the parameter space of GMM-QFs~\eqref{eq:GMMQF} is
\begin{equation} \label{eq:param.space}
    \mathfrak{M}_K \coloneqq \Real^{K \times N_a} \times
    \Real^{D_s \times K} \times (\PD^{D_s})^K \,.
\end{equation}
Interestingly, being the Cartesian product of Euclidean
spaces and manifold of symmetric positive definite matrices,
$\mathfrak{M}_K$ inherits a natural Riemannian manifold
structure~\cite{RobbinSalamon:22, Absil:OptimManifolds:08}.
Consequently, the tangent space at $\vectgr{\Omega}$ becomes 
\begin{equation}\label{eq:param.space.tangent}
    T_{\vectgr{\Omega}} \mathfrak{M}_K = \Real^{K \times
    N_a} \times \Real^{D_s \times K} \times
    T_{\vect{C}_1} \PD^{D_s} \times \ldots \times
    T_{\vect{C}_K} \PD^{D_s} \,,
\end{equation}
where $T_{\vect{C}_k} \PD^{D_s}$ stands for the tangent space
to $\PD^{D_s}$ at $\vect{C}_k$, known to be the set of all
$D_s \times D_s$ symmetric matrices~\cite{RobbinSalamon:22,
Absil:OptimManifolds:08}. Detailed computations on
Riemannian manifold are discussed
in~\cref{subsec:compute.grad}.

\subsection{Online setting of
GMM-QFs}\label{sec:online-PI}

This study focuses on the online-learning setting, in which
streaming data are presented sequentially to the RL agent at
each time instance $n \in \IntegerPP$, aligned with the
PI-iteration index $n$ introduced
in~\cref{sec:background}. At each time instance $n$, the
controlled system provides its current state (data)
$\vect{s}_n$ to the agent, which selects an action $a_n
\coloneqq \mu_n(\vect{s}_n)$ according to the current policy
$\mu_n$. The environment returns a reward (feedback) $r_n$ in
response to this action, and the system transitions to the
next state $\vect{s}_n^{\prime}$, which becomes
$\vect{s}_{n+1} \coloneqq \vect{s}_n^{\prime}$ for the
subsequent time instance. This interaction is summarized by
the tuple $(\vect{s}_n, a_n, r_n, \vect{s}_{n+1})$, which the
agent accumulates in a buffer $\mathcal{B}_n$, of
user-defined capacity $B \in \IntegerPP$, updated according
to $\mathcal{B}_{n+1} = \mathcal{B}_n \cup
\Set{(\vect{s}_n, a_n, r_n, \vect{s}_{n+1})}$. When the
buffer exceeds its capacity, i.e., $\lvert \mathcal{B}_{n+1}
\rvert > B$, its oldest tuple is discarded.

Motivated by the significance of the fixed point
$Q^{\diamond}_{\mu_n} \in \Fix \Bellman^{\diamond}_{\mu_n}$,
the widely used Bellman-residual (BR)
approach~\cite{onlineBRloss:16, regularizedpi:16} estimates
$Q^{\diamond}_{\mu_n}$ by any minimizer of an empirical loss.
In the online setting, this loss is recomputed at every time
step $n$: a dataset $\mathcal{D}_n \coloneqq
\Set{(\vect{s}_t^{(n)}, a_t^{(n)}, r_t^{(n)},
\vect{s}_t'^{(n)})}_{t=1}^{T}$ is sampled from the experience
buffer $\mathcal{B}_n$ (\cref{sec:er}), compensating for
the typical inaccessibility of $\mathbb{E}_{\vect{s}' \given
(\vect{s}, a)} [\cdot]$ in \eqref{Bellman.standard.mu}.
Inspired also by the classical temporal-difference (TD)
strategy~\cite{sutton88td}, the resulting BR objective is,
$\forall \vectgr{\Omega} \in \mathfrak{M}_K$,
\begin{equation} \label{eq:BRM.param}
  L_{\mu_n} (\vectgr{\Omega}; Q_n, \mathcal{D}_n)
  \coloneqq \frac{1}{T} \sum_{t=1}^{T} \Big[
    \sum_{k=1}^{K} \xi_{k} (a_t^{(n)}) \mathscr{G}_k(\vect{s}_t^{(n)})
    - r_t^{(n)} - \alpha
    Q_n (\vect{s}_t'^{(n)}, \mu_n (\vect{s}_t'^{(n)})) \Big]^2 \,.
\end{equation}
Note that only the column vectors of $\vectgr{\Xi}$
corresponding to $\Set{a_t^{(n)}}_{t=1}^{T}$ enter
\eqref{eq:BRM.param} through $\mathcal{D}_n$; the remaining
columns make no contribution, via the associated partial
derivatives, to the Riemannian gradient of $L_{\mu_n}(\cdot ;
Q_n, \mathcal{D}_n)$ (\cref{subsec:compute.grad}).

\begin{algorithm}[t!]
\caption{Online approximate PI via S-GMM-QFs}\label{algo:gmmq}
\SetAlgoLined

\Require{Buffer capacity $B \in \IntegerPP$, sampling
    capacity $T \in \IntegerPP$}

Arbitrarily initialize
    $\vectgr{\Omega}_0 \in \mathfrak{M}_K$, thus
    by~\eqref{eq:GMMQF} $Q_0
    \in \mathcal{Q}_K$, 
    policy $\mu_0 \in \mathcal{M}$, and experience buffer $\mathcal{B}_0 \gets \emptyset$\;
$n\gets 0$, $\vectgr{\Pi}_0 \gets \vect{0}$, $\sigma_0 \gets
    0$ \;
Environment starts at an initial state $\vect{s}_0 \in \mathfrak{S}$\;

\While{$n \in \IntegerP$ \label{line:while.loop}}{%
    Tuple $(\vect{s}_n, a_n, r_n, \vect{s}_{n+1})$ becomes available to the agent\;
    $\mathcal{B}_{n+1} \gets \mathcal{B}_n \cup \{(\vect{s}_n, a_n, r_n, \vect{s}_{n+1})\}$\;
    \lIf{$\lvert \mathcal{B}_{n+1} \rvert > B$}{discard oldest tuple}

    Sample dataset $\mathcal{D}_n$ from $\mathcal{B}_{n+1}$
    following~\cref{sec:er}\;
    Define $\mathscr{L}_{\mu_n}(\cdot; Q_n, \mathcal{D}_n)$ by~\eqref{eq:reg.BRM}\;

    \textbf{Policy evaluation:}

    \label{line:policy.eval} Compute $\grad
    \mathscr{L}_{\mu_n}(\vectgr{\Omega}_n; Q_n,
    \mathcal{D}_n)$ by~\cref{prop:gradients}\label{line:compute.gradient}\;
    Update $\vectgr{\Omega}_{n+1}$ by \eqref{eq:update.Omega}\;
    Define $Q_{n+1}\in\mathcal{Q}_K$ by
    $\vectgr{\Omega}_{n+1}$ via~\eqref{eq:GMMQF};

    \textbf{Policy improvement:} $\mu_{n+1}(\vect{s}) \gets \arg\max_{a \in \mathfrak{A}} Q_{n+1}(\vect{s}, a)$\;

    \lIf{task terminated}{reset $\vect{s}_{n+1} \gets \vect{s}_0$}

    $n \gets n+1$ and go to line~\ref{line:while.loop}\;
}

\end{algorithm}

The novel S-GMM-QFs are integrated into a classical online
approximate policy-iteration (PI) scheme---see
\cref{algo:gmmq}, consisting of the usual two steps: policy
evaluation and policy improvement. In the policy-evaluation
step, the current policy $\mu_n$ guides the update of the
S-GMM-QF estimate via the regularized loss
$\mathscr{L}_{\mu_n}(\vectgr{\Omega}; Q_n, \mathcal{D}_n)$,
which augments~\eqref{eq:BRM.param} with a
sparsity-promoting regularizer introduced
in~\cref{sec:sparse-gmmq}. Since $\mathscr{L}_{\mu_n}$ is
defined on the Riemannian manifold $\mathfrak{M}^{(J)}_K$,
its Riemannian gradient drives the update of
$\vectgr{\Omega}_{n+1}$ via a Riemannian extension of
Adam~\cite{RAdam}, which additionally maintains a momentum
term $\vectgr{\Pi}_n$ and a variance estimate $\sigma_n$; the
full update rule~\eqref{eq:update.Omega} is detailed
in~\cref{sec:riemannian}. In the policy-improvement step, the
agent updates its policy based on the updated S-GMM-QF
estimate.

The policy-improvement step of~\cref{algo:gmmq} operates in a
discrete, finite action space $\mathfrak{A}$, consistent with
the standard PI literature~\cite{Bertsekas:RLandOC:19}. While
this simplifies the update to a direct maximization over
$\mathfrak{A}$, the proposed framework is naturally amenable
to continuous-action settings: an actor-critic
approach~\cite{konda99ac} could employ a separate policy
approximator alongside the S-GMM-QF critic, retaining the
Riemannian policy-evaluation machinery developed here.
Developing such extensions is a promising direction currently
being pursued (see also~\cref{sec:conclusion}).

\section{Sparse GMM-QFs via Hadamard overparametrization}
\label{sec:sparse-gmmq}
    In practice, identifying an appropriate number of
    Gaussian components for GMM-QFs~\eqref{eq:GMMQF} is
    challenging, due to the differences in nature of the
    learning task. This section develops a principled
    approach to
    identify and retain only the most impactful Gaussians by
    promoting sparsity for GMM-QFs.

\subsection{Hadamard overparametrization for smooth sparse
regularization} \label{subsec:hadamard}

Following common practice in sparsifying mixture models,
this paper targets only the mixture weight matrix
$\vectgr{\Xi} \in \Real^{K \times N_a}$ for
sparsification: pushing $\xi_k(a)$ to zero effectively
prunes the $k$-th Gaussian component from the Q-function
corresponding to action $a$, retaining only the most
impactful ones. This choice is not merely conventional
but also structurally necessary: unlike $\vectgr{\Xi}$,
which is unconstrained in Euclidean space, directly
regularizing the means $\vect{M}$ or covariances
$\Set{\vect{C}_k}_{k=1}^{K}$ would distort the geometric
structure of the parameter space $\mathfrak{M}_K$---for
instance, sparsifying $\vect{C}_k$ can violate
positive-definiteness, while sparsifying $\vect{M}$
pushes all Gaussian centers toward the origin, causing
information loss. Formally, the regularized BR
objective~\eqref{eq:BRM.param} takes the form
\begin{equation} \label{eq:reg.obj}
    \min\nolimits_{\vectgr{\Omega} \in \mathfrak{M}_K}
    \Big[ \mathscr{L}_{\mu_n}(\vectgr{\Omega}; Q_n,
    \mathcal{D}_n) \coloneqq
    L_{\mu_n}(\vectgr{\Omega}; Q_n, \mathcal{D}_n)
    + \rho \mathscr{R}(\vectgr{\Xi}) \Big] \,,
\end{equation}
where $L_{\mu_n}(\vectgr{\Omega}; Q_n, \mathcal{D}_n)$ is
defined by~\eqref{eq:BRM.param}, 
$\mathscr{R}(\vectgr{\Xi})$ is a
sparsity-promoting penalty on $\vectgr{\Xi}$ and
$\rho>0$ is a user-defined regularization coefficient.

The ideal sparsity measure is the $\ell_0$-norm
$\norm{\vectgr{\Xi}}_0$, which counts nonzero entries;
however related optimization is NP-hard. The closest
convex relaxation to $\ell_0$-norm, the $\ell_1$-norm,
enables tractable optimization via convexity but also
introduces estimate bias by shrinking large
coefficients and may fail to recover the true support
consistently~\cite{hadamard_sparse}. The quasi-norm
$\norm{\vectgr{\Xi}}_p \coloneqq
\big(\sum_{k=1}^{K}\sum_{a=1}^{N_a} |\xi_k(a)|^p\big)^{1/p}$, with $p \in (0,1)$ offers a
better alternative, tighter approximation to $\ell_0$-norm
and requires weaker conditions for consistent support
recovery, promoting sparser solutions with less
shrinkage bias~\cite{hadamard_sparse}. However, direct
quasi-norm regularization with $p \in (0,1)$ leads to a
non-smooth problem, which is incompatible with
gradient-based optimization, since the sparse solutions are
expect to lie exactly at these non-differentiable points.

To overcome the aforementioned difficulty, this paper
adopts the Hadamard overparametrization
framework~\cite{hadamard_sparse}. In particular, each
scalar weight $\xi_k(a)$ is now replaced by the product
of $J$ auxiliary factors, i.e., $\xi_k(a) \coloneqq
\prod_{j=1}^J \upsilon_{k,j}(a)$, with a user-defined $J
\in \IntegerPP$. Collecting these
into matrices $\vectgr{\Upsilon}_j \in \Real^{K \times
N_a}$ defined entry-wise by $[\Upsilon_j]_{k,a}
\coloneqq \upsilon_{k,j}(a)$, the weight matrix
$\vectgr{\Xi}$ in~\eqref{eq:GMMQF} satisfies the
Hadamard overparametrization
\[
    \vectgr{\Xi} = \odot_{j=1}^{J} \vectgr{\Upsilon}_j
    \,,
\]
where $\odot$ denotes the Hadamard (element-wise)
product. The non-smooth quasi-norm on $\vectgr{\Xi}$ is
then replaced by the smooth surrogate regularizer
\begin{equation} \label{eq:smooth-reg}
    \mathscr{R}(\vectgr{\Xi}) = \mathscr{R}(\vectgr{\Upsilon}_1, \dots,
    \vectgr{\Upsilon}_J) \coloneqq \sum_{j=1}^{J}
    \norm{\vectgr{\Upsilon}_j}^2_{\textnormal{F}} \,,
\end{equation}
where $\norm{}_{\textnormal{F}}$ denotes the Frobenius
norm. It is shown in~\cite[Lemma 10]{hadamard_sparse}
that the optimization via smooth regularizer~\eqref{eq:smooth-reg} in
fact yields similar solutions with the quasi-norm
$\norm{\vectgr{\Xi}}_{2/J}^{2/J}$ if $J > 2$. Notably,
this sparsity of $\vectgr{\Xi}$ emerges implicitly
through gradient-based optimization of the smooth
objective~\eqref{eq:reg.obj}, without any explicit
thresholding.

\subsection{Sparse GMM-QFs and learning objective}
\label{subsec:SGMMQF}

Equipped with the Hadamard
factorization~\eqref{eq:smooth-reg}, the novel class of
sparse (S-)GMM-QFs is now defined as:
\begin{align}
    \mathcal{Q}^{(J)}_K \coloneqq \Bigl \{
    Q \colon \mathfrak{S} \times \mathfrak{A} \to \Real \colon (\vect{s}, a) \mapsto Q(\vect{s}, a) 
    \coloneqq & \sum_{k=1}^{K} \mathop{} \prod_{j=1}^{J}
  \upsilon_{k,j}(a) \mathscr{G}(\vect{s} \given \vect{m}_k, \vect{C}_k) \notag\\
  & \Big \vert\, \upsilon_{k,j}(a) \in \Real, \vect{m}_k \in \Real^{D_s}, \vect{C}_k \in \PD^{D_s}
  \Bigl \} \,. \label{eq:SGMMQF}
\end{align} 
Now, let $\vectgr{\Omega} \coloneqq (\vectgr{\Upsilon}_1,
\dots, \vectgr{\Upsilon}_J, \vect{M}, \vect{C}_1, \dots,
\vect{C}_K)$ parametrize Q-functions in~\eqref{eq:SGMMQF}, 
the corresponding parameter space is
\[
    \mathfrak{M}^{(J)}_K \coloneqq (\Real^{K \times N_a})^J
    \times \Real^{D_s \times K} \times (\PD^{D_s})^K \,.
\]
Altogether, the overall BR objective~\eqref{eq:reg.obj}
is recast to justify with S-GMM-QFs as the following:
\begin{equation}\label{eq:reg.BRM}
    \min\nolimits_{\vectgr{\Omega} \in \mathfrak{M}^{(J)}_K}
    \Big\{ \mathscr{L}_{\mu_n}(\vectgr{\Omega}; Q_n,
    \mathcal{D}_n) \coloneqq
    \frac{1}{T} \sum_{t=1}^{T}
    \Big[
        \sum_{k=1}^{K} \prod_{j=1}^{J} \upsilon_{k,j} (a_t^{(n)})
        \mathscr{G}_k(\vect{s}_t^{(n)})
        - r_t^{(n)} - \alpha
        Q_n (\vect{s}_t'^{(n)}, \mu_n (\vect{s}_t'^{(n)}))
    \Big]^2
    + \rho \sum_{j=1}^{J}
    \norm{\vectgr{\Upsilon}_j}^2_{\text{F}}
    \Big\} \,.
\end{equation}

Instead of learning $\vectgr{\Xi}$ directly as
in~\cite{minh25tsp}, S-GMM-QFs learn the multiplicative
factors $\Set{\vectgr{\Upsilon}_j}_{j=1}^{J}$, enlarging the
parameter space while preserving the functional form
of~\eqref{eq:GMMQF}. Since any scalar $\xi_k(a) \in \Real$ is
trivially attainable as a product of $J$ real components---e.g., by
setting $\upsilon_{k,1}(a) \coloneqq \xi_k(a)$ and
$\upsilon_{k,j}(a) \coloneqq 1$ for $j>1$---the sets of
Q-functions representable by $\mathcal{Q}_K^{(J)}$ and
$\mathcal{Q}_K$ coincide: Hadamard overparametrization changes
only the parametrization of the model, not the underlying
function class, so the universal approximation property of
Proposition~\ref{prop:dense} transfers immediately to
S-GMM-QFs without further proof.

Crucially, this enlarged, redundant parametrization is not
merely a reformulation: it changes the \textit{dynamics} of
gradient-based learning. Under gradient descent on the
regularized objective~\eqref{eq:reg.BRM}, the coupled
multiplicative structure of $\xi_k(a) = \prod_j
\upsilon_{k,j}(a)$ induces an implicit bias toward sparse
solutions, whereby weights corresponding to uninformative
Gaussian components are driven to a magnitude that is
negligible for all practical purposes, effectively pruning the corresponding
component from the Q-function. In the taxonomy
of~\cite{sparseDRL}, S-GMM-QFs training therefore belongs to
the dense-to-sparse kind, akin to
pruning~\cite{dense2sparse}. However, unlike pruning, no
sparsity level or schedule is fixed, the surviving
components emerge naturally from the regularized
objective~\eqref{eq:reg.BRM}. 

Hadamard overparametrization incurs a cost: representing $\xi_k(a)$ via $J$ auxiliary factors inflates
both parameter count and per-step computational complexity. Additionally, while $L_{\mu_n}$ is convex in
$\vectgr{\Xi}$ (with $\vect{M}$ and $\Set{\vect{C}_k}_{k=1}^{K}$ fixed), the multiplicative structure
$\xi_k(a) = \prod_{j=1}^{J} \upsilon_{k,j}(a)$ renders $L_{\mu_n}$ non-convex in
$\Set{\vectgr{\Upsilon}_j}_{j=1}^{J}$, potentially introducing saddle points in the optimization
landscape. However, this trade-off is justified on two grounds: overparametrization provides enhanced
representational capacity, and sufficiently overparametrized systems typically satisfy a
Polyak--{\L}ojasiewicz-type condition in the enlarged parameter space, guaranteeing convergence of
gradient-based methods despite saddle points~\cite{liu22overparams}.

Recall that direct gradient minimization with regularizer
via $\ell_p$ quasi-norm ($p \in (0,1)$) is hindered by its
non-smoothness at the origin, where sparse solutions are
expected to locate.~\cite{hadamard_sparse} shows that this
obstacle can be sidestepped by a \textit{smooth}
objective~\eqref{eq:reg.BRM} to implicitly recover solutions
comparable to those by $\ell_p$-norm itself, with
$p=2/J$. Regularizer via Hadamard
overparametrization~\eqref{eq:smooth-reg} can therefore be
regarded as a smooth surrogate for the usual non-smooth
counterpart, trading an explicit but intractable-to-optimize
penalty for an implicit sparsity-inducing dynamic. 
In practice, the two approaches differ
fundamentally: direct optimization of $\vectgr{\Xi}$ under
$\ell_1$- or $\ell_p$-penalty yields weights that are small
but
remain within a few orders of magnitude of the retained ones,
requiring an explicit thresholding step to separate signal
from noise. From an interpretability standpoint, this
effectively-exact sparsity allows users to transparently
identify, without any auxiliary pruning step, which Gaussian
components are most influential to the Q-function for each
action. The same
implicit-bias mechanism that induces sparsity is also
understood to improve generalization in overparametrized
models more broadly~\cite{hoff2017lasso, li2023tail,
ziyin2023spred, hadamard_sparse}, suggesting that the benefits
of Hadamard overparametrization for S-GMM-QFs extend beyond
interpretability alone.


\section{Riemannian optimization for online
PI}\label{sec:riemannian}
\subsection{Computing Riemannian
gradients}\label{subsec:compute.grad}
This section discusses the basics of Riemannian optimization
used in the proposed framework.
In particular, computations on the manifold
$\mathfrak{M}^{(J)}_K$
requires a Riemannian metric~\cite{RobbinSalamon:22,
  Absil:OptimManifolds:08}. To this end, $\forall \vectgr{\Omega} = ( \vectgr{\Upsilon}_1, \ldots,
\vectgr{\Upsilon}_J, \vect{M}, \vect{C}_1, \ldots,
\vect{C}_K ) \in \mathfrak{M}^{(J)}_K$ and $\forall \vectgr{\Lambda}_i
\coloneqq (\vectgr{\Theta}_{i1}, \dots, \vectgr{\Theta}_{iJ}, \allowbreak \vect{P}_i, \vect{X}_{i1}, \dots,
\vect{X}_{iK}) \in T_{\vectgr{\Omega}} \mathfrak{M}^{(J)}_K$, $i\in \overline{1,2}$, define the Riemannian metric
$\innerp{\vectgr{\Lambda}_1}{\vectgr{\Lambda}_2}_{\vectgr{\Omega}} \coloneqq \sum_{j=1}^{J}
\trace(\vectgr{\Theta}_{1j}^\intercal \vectgr{\Theta}_{2j}) + \trace(\vect{P}_1^\intercal \vect{P}_2) + \sum_{k=1}^{K}
\innerp{\vect{X}_{1k}}{\vect{X}_{2k}}_{\vect{C}_k}$, where $\trace(\cdot)$ stands for the trace of a matrix, and
$\innerp{\cdot}{\cdot}_{\vect{C}_k}$ is any user-defined Riemannian metric of $\PD^{D_s}$. Here, the
affine-invariant~\cite{Pennec:Riemannian:19} metric is chosen: $\forall \vect{C} \in \PD^{D_s}$ and $\forall \vect{X}_i
\in T_{\vect{C}} \PD^{D_s}$, $i\in \overline{1, 2}$, let $\innerp{\vect{X}_1}{\vect{X}_2}_{\vect{C}} \coloneqq
\trace(\vect{C}^{-1} \vect{X}_1 \vect{C}^{-1} \vect{X}_2)$.
Moreover, a \textit{retraction}\/ mapping
$R_{\vectgr{\Omega}} (\cdot) \colon T_{\vectgr{\Omega}}
\mathfrak{M}^{(J)}_K \to
\mathfrak{M}^{(J)}_K$~\cite{Absil:OptimManifolds:08} is needed. For $\vectgr{\Lambda} \coloneqq (\vectgr{\Theta}_1, \ldots
\vectgr{\Theta}_J, \vect{P}, \vect{X}_1, \ldots, \vect{X}_K)
\in T_{\vectgr{\Omega}} \mathfrak{M}^{(J)}_K$, it turns out that
$R_{\vectgr{\Omega}}(\vectgr{\Lambda}) \coloneqq (\ldots, R_{\vectgr{\Upsilon}_j} (\vectgr{\Theta}_j), \ldots,
R_{\vect{M}}(\vect{P}), \dots, R_{\vect{C}_k} (\vect{X}_k), \dots)$, with the respective retractions chosen as follows:
$\forall j\in \overline{1,J}$, $\forall k \in \overline{1,K}$,%
\begin{equation}\label{eq:retractions}%
    R_{\vectgr{\Upsilon}_j}(\vectgr{\Theta}_j)  \coloneqq \vectgr{\Upsilon}_j + \vectgr{\Theta}_j\,,
      \qquad R_{\vect{M}}(\vect{P})  \coloneqq \vect{M} + \vect{P}\,, 
      \qquad R_{\vect{C}_k}(\vect{X}_k)  \coloneqq \exp_{\vect{C}_k} (\vect{X}_k) \,,
\end{equation}%
where, under the affine-invariant metric, 
\begin{equation} \label{eq:expC}
    \exp_{\vect{C}_k} (\vect{X}_k) \coloneqq \vect{C}_k^{1/2} \Exp [\vect{C}_k^{-1/2} \vect{X}_k \vect{C}_k^{-1/2}]
    \vect{C}_k^{1/2} \,,
\end{equation}
with $\Exp(\cdot)$ being the matrix exponential~\cite{RobbinSalamon:22, Hall:Lie:03}.

To run line~\ref{line:compute.gradient} in \cref{algo:gmmq}, the following proposition provides the Riemannian gradient
$\grad \mathscr{L}_{\mu_n} = (\ldots, \grad_{ \vectgr{\Upsilon}_j } \mathscr{L}_{\mu_n}, \ldots, \grad_{ \vect{M} }
\mathscr{L}_{\mu_n}, \ldots, \grad_{ \vect{C}_k } \mathscr{L}_{\mu_n}, \ldots ) = \grad L_{\mu_n} +
\grad \mathscr{R}$~\cite{Absil:OptimManifolds:08}.
\begin{subequations}\label{all.gradients}
    \begin{proposition}\label{prop:gradients} (Computing
        gradients) Given a policy $\mu_n$, a Q-function $Q_n$, and a sampled dataset $\mathcal{D}_n$, let
  $\mathscr{L}_{\mu_n} (\cdot; Q_n, \mathcal{D}_n)$ according to \eqref{eq:reg.BRM}.  Consider
  $\vectgr{\Omega} = (\vectgr{\Upsilon}_1, \ldots, \vectgr{\Upsilon}_J, \vect{M}, \vect{C}_1, \ldots, \vect{C}_K) \in
        \mathfrak{M}^{(J)}_K$ and its associated S-GMM-QF $Q_{ \vectgr{\Omega} }$. For convenience, let also $\delta_t^{(n)} \coloneqq Q_{
    \vectgr{\Omega} } (\vect{s}_t^{(n)}, a_t^{(n)}) - r_t^{(n)} - \alpha Q_n (\vect{s}_{t}'^{(n)},
  \mu_n(\vect{s}_{t}'^{(n)}))$. Then, the following hold true.

  \begin{thmlist}

  \item $\forall k\in \overline{1,K}, \forall j\in
      \overline{1, J}, \forall a_t^{(n)}\in \mathfrak{A}$, $\grad_{ \upsilon_{k,j}
    (a_t^{(n)}) } L_{\mu_n} (\vectgr{\Omega}; Q_n, \mathcal{D}_n) = {\partial L_{\mu_n}}
    (\vectgr{\Omega}; Q_n, \mathcal{D}_n) / {\partial
      \upsilon_{k,j} (a_t^{(n)}) }$, with
    \begin{align}\label{eq:dL.dUpsilon}
      \frac{\partial L_{\mu_n}}{\partial
        \upsilon_{k,j} (a_t^{(n)}) } (\vectgr{\Omega}; Q_n, \mathcal{D}_n) =
      \frac{1}{T} 2\delta_t^{(n)}
        \frac{\prod_{j'=1}^{J} \upsilon_{k, j'}(a_t^{(n)}) }{
            \upsilon_{k,j} (a_t^{(n)}) }
      \mathscr{G}_k (\vect{s}_t^{(n)}) \,.
    \end{align}

  \item Let $\xi_k (a_t^{(n)}) \coloneqq \prod_{j=1}^{J}
      \upsilon_{k,j} (a_t^{(n)})$. Then, $\forall k \in \overline{1,K}$, $\grad_{
    \vect{m}_k } L_{\mu_n} (\vectgr{\Omega}; Q_n, \mathcal{D}_n) = {\partial L_{\mu_n}}
    (\vectgr{\Omega}; Q_n, \mathcal{D}_n) / {\partial \vect{m}_k}$, with
    \begin{equation}\label{eq:dL.dM}
      \frac{\partial L_{\mu_n}}{\partial \vect{m}_k} (\vectgr{\Omega}; Q_n, \mathcal{D}_n) =
      \frac{1}{T} \sum_{t=1}^{T} 4\delta_t^{(n)} \xi_k (a_t^{(n)}) \vect{C}_k^{-1} (\vect{s}_t^{(n)} - \vect{m}_k) \mathscr{G}_k
      (\vect{s}_t^{(n)}) \,.
    \end{equation}

  \item $\forall k \in \overline{1, K}$, with $\vect{B}_{tk} \coloneqq \mathscr{G}_k(\vect{s}_t^{(n)}) (\vect{s}_t^{(n)} -
    \vect{m}_k) (\vect{s}_t^{(n)} - \vect{m}_k)^\intercal$ and under the affine-invariant metric for
    $\PD^{D_s}$~\cite{Pennec:Riemannian:19}, $\grad_{ \vect{C}_k } L_{\mu_n} (\vectgr{\Omega}; Q_n,
    \mathcal{D}_n) = {\partial L_{\mu_n}} (\vectgr{\Omega}; Q_n, \mathcal{D}_n) / {\partial \vect{C}_k}$,
    with
    \begin{equation}\label{eq:dL.dC}
      \frac{\partial L_{\mu_n}}{\partial \vect{C}_k} (\vectgr{\Omega}; Q_n, \mathcal{D}_n) = \frac{1}{T}
      \sum_{t=1}^{T} 2\delta_t^{(n)} \xi_k (a_t^{(n)}) \vect{B}_{tk} \in T_{\vect{C}_k} \PD^{D_s} \,.
    \end{equation}

  \item ${\partial \mathscr{R} (\vectgr{\Omega}) } / {\partial \vectgr{\Upsilon}_j} = 
      2 \vectgr{\Upsilon}_j$
      , $\forall j \in \overline{1, J}$.

  \end{thmlist}
\end{proposition}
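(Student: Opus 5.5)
The plan is to exploit the product structure of $\mathfrak{M}^{(J)}_K$ together with the chosen metric. Since $\innerp{\cdot}{\cdot}_{\vectgr{\Omega}}$ is a sum of metrics on the factors, the Riemannian gradient of a smooth function on $\mathfrak{M}^{(J)}_K$ is the tuple of Riemannian gradients taken factor by factor, with all other blocks held fixed. By the defining property of the Riemannian gradient, $\innerp{\grad f}{\vectgr{\Lambda}}_{\vectgr{\Omega}} = \mathrm{D}f(\vectgr{\Omega})[\vectgr{\Lambda}]$ for all $\vectgr{\Lambda}$, and it suffices to check this one block at a time. On the Euclidean factors $(\Real^{K\times N_a})^J$ and $\Real^{D_s\times K}$ the metric is the Frobenius inner product $\trace(\cdot^\intercal\cdot)$, so there the Riemannian gradient coincides with the ordinary Euclidean gradient. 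This reduces parts (i), (ii) and (iv) to routine partial differentiation.

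For (i), I write $L_{\mu_n}=\frac1T\sum_t (\delta_t^{(n)})^2$ and note that the target term $\alpha Q_n(\cdot)$ does not depend on $\vectgr{\Omega}$. Then $\partial L_{\mu_n}/\partial\upsilon_{k,j}(a) = \frac1T\sum_{t} 2\delta_t^{(n)}\,\partial Q_{\vectgr{\Omega}}(\vect{s}_t^{(n)},a_t^{(n)})/\partial \upsilon_{k,j}(a)$. The inner derivative vanishes unless $a_t^{(n)}=a$. When $a_t^{(n)}=a$, the product rule gives $\prod_{j'\neq j}\upsilon_{k,j'}(a)\,\mathscr{G}_k(\vect{s}_t^{(n)})$, which is the quotient written in \eqref{eq:dL.dUpsilon}. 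Two conventions need stating: the quotient is read as $\prod_{j'\ne j}$ when $\upsilon_{k,j}=0$, and the per-$t$ contributions are summed over all $t$ with $a_t^{(n)}=a$. For (iv), $\mathscr{R}=\sum_j\trace(\vectgr{\Upsilon}_j^\intercal\vectgr{\Upsilon}_j)$ differentiates directly to $2\vectgr{\Upsilon}_j$; the factor $\rho$ then enters $\grad\mathscr{L}_{\mu_n}$.

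For (ii), only the $k$-th summand of $Q_{\vectgr{\Omega}}$ depends on $\vect{m}_k$. The derivative of the exponent $-(\vect{s}-\vect{m}_k)^\intercal\vect{C}_k^{-1}(\vect{s}-\vect{m}_k)$ with respect to $\vect{m}_k$ is $2\vect{C}_k^{-1}(\vect{s}-\vect{m}_k)$, using the symmetry of $\vect{C}_k^{-1}$. Multiplying by $\mathscr{G}_k$, by $\xi_k(a_t^{(n)})$, and by the chain-rule factor $2\delta_t^{(n)}$ gives the constant $4$ in \eqref{eq:dL.dM}.

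Part (iii) is the only genuinely Riemannian step, and I expect it to be the main obstacle. I first compute the Euclidean gradient with respect to symmetric $\vect{C}_k$. The identity $\mathrm{D}(\vect{C}^{-1})[\vect{X}]=-\vect{C}^{-1}\vect{X}\vect{C}^{-1}$ gives $\mathrm{D}\mathscr{G}_k(\vect{s})[\vect{X}] = \mathscr{G}_k(\vect{s})\,\trace\bigl(\vect{C}_k^{-1}(\vect{s}-\vect{m}_k)(\vect{s}-\vect{m}_k)^\intercal\vect{C}_k^{-1}\vect{X}\bigr)$. Hence the Euclidean gradient of $L_{\mu_n}$ is $\vect{E}_k\coloneqq\frac1T\sum_t 2\delta_t^{(n)}\xi_k(a_t^{(n)})\vect{C}_k^{-1}\vect{B}_{tk}\vect{C}_k^{-1}$, which is symmetric. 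Under the affine-invariant metric, the requirement $\trace(\vect{C}_k^{-1}\vect{G}\vect{C}_k^{-1}\vect{X})=\trace(\vect{E}_k\vect{X})$ for every symmetric $\vect{X}$, together with the symmetry of both $\vect{G}$ and $\vect{E}_k$, forces $\vect{G}=\vect{C}_k\vect{E}_k\vect{C}_k$. The two factors $\vect{C}_k^{\pm1}$ cancel, leaving exactly $\frac1T\sum_t 2\delta_t^{(n)}\xi_k(a_t^{(n)})\vect{B}_{tk}$. This is symmetric, so it lies in $T_{\vect{C}_k}\PD^{D_s}$. The care needed here is to restrict the gradient to the symmetric tangent space, so that no symmetrization term appears, and to track the sign, which becomes positive because the exponent carries a minus sign. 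Summing the blocks, using $\grad\mathscr{L}_{\mu_n}=\grad L_{\mu_n}+\rho\grad\mathscr{R}$ with $\mathscr{R}$ independent of $\vect{M}$ and $\vect{C}_k$, completes the proof.
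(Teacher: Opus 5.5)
Your proposal is correct and follows essentially the same route as the paper. The paper uses plain chain-rule computations for the Euclidean blocks, and for $\vect{C}_k$ it takes the directional derivative via $\Df \inv(\vect{C})[\vect{X}] = -\vect{C}^{-1}\vect{X}\vect{C}^{-1}$ and matches it against the affine-invariant metric over arbitrary symmetric $\vect{X}$. Your intermediate Euclidean gradient $\vect{E}_k$ with $\vect{G}=\vect{C}_k\vect{E}_k\vect{C}_k$, your use of the outer product $(\vect{s}_t^{(n)}-\vect{m}_k)(\vect{s}_t^{(n)}-\vect{m}_k)^\intercal$, and your restriction of the sum in (i) to those $t$ with $a_t^{(n)}=a$ are only cleaner bookkeeping of that same argument.
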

 \end{subequations}

\begin{proof}
    See~\ref{prop:gradients.proof}.
\end{proof}

\begin{figure}[t]
    \centering
    \includegraphics[width=.7\textwidth]{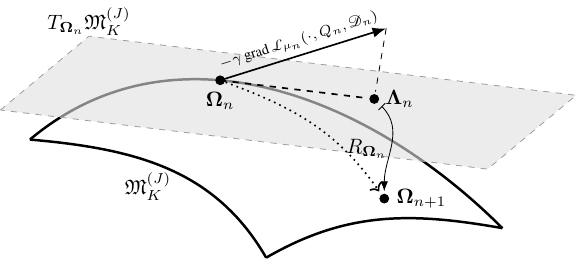}
    \caption[]{ Vanilla gradient-descent update on
    Riemannian manifold $\mathfrak{M}^{(J)}_K$: the Euclidean
    gradient of $\mathscr{L}_{\mu_n} (\vectgr{\Omega}_n, Q_n,
    \mathcal{D}_n)$ is first projected to the tangent space
    $T_{\vectgr{\Omega}_n} \mathfrak{M}^{(J)}_K$ to obtain
    $\vectgr{\Lambda}_n$, then the retraction
    $R_{\Omega_n}(\cdot)$ maps $\vectgr{\Lambda}_n$ back
    onto the manifold to produce $\vectgr{\Omega}_{n+1}$.}
\end{figure}


\subsection{Riemannian gradient-descent}\label{subsec:radam}

Having the Riemannian gradient computed
via~\cref{prop:gradients}, gradient-based
optimization is intrinsically
on $\mathfrak{M}^{(J)}_K$. While vanilla Riemannian gradient
descent~\cite{Absil:OptimManifolds:08} generalizes classical
gradient-descent to curved spaces, it exhibits high variance
and failed to converge reliably in some preliminary
experiments. To address this, 
%
%
a Riemannian extension of the conventional Adam
optimization---coined RAdam---introduced by~\cite{RAdam} is
adopted, incorporating adaptive moment estimation
within the manifold setting. More specially, at iteration
$n$ given $\vectgr{\Pi}_{n-1} \in
T_{\vectgr{\Omega}_{n-1}}\mathfrak{M}^{(J)}_K$ and $\sigma_{n-1}
\in \Real$ available, the refined $\vectgr{\Omega}_{n+1}$ is
obtained as following:
\begin{subequations}\label{eq:update.Omega}
  \begin{alignat}{2}
    \vectgr{\Pi}_n & {} \coloneqq {} && \beta_1 \varphi_{\vectgr{\Omega}_{n-1} \to \vectgr{\Omega}_{n}}
    (\vectgr{\Pi}_{n-1})
    + (1 - \beta_1) \grad \mathscr{L}_{\mu_n} (\vectgr{\Omega}_n; Q_n, \mathcal{D}_n)
    \,, \label{eq:parallel.trans} \\
    \sigma^2_{n} & \coloneqq && \beta_2 \sigma^2_{n-1} 
    + (1 - \beta_2)\norm{\grad \mathscr{L}_{\mu_n}
      (\vectgr{\Omega}_n; Q_n, \mathcal{D}_n)}^2_{\vectgr{\Omega}_n} \,, \\
    \vectgr{\Omega}_{n+1} & \coloneqq &&
      R_{\vectgr{\Omega}_n} [ - \gamma \vectgr{\Pi}_{n}
      \sqrt{1 - \beta_2^n} /
      ( \sigma_{n} (1 - \beta_1^n) ) ] \label{eq:update.5} \,,
  \end{alignat}
\end{subequations}
with user-defined exponential decay rates $\beta_1, \beta_2 \in (0, 1)$,
and a learning rate $\gamma > 0$; while the operator
$\varphi_{\vectgr{\Omega}_{n-1} \to
\vectgr{\Omega}_{n}}(\cdot):
T_{\vectgr{\Omega}_{n-1}}\mathfrak{M}^{(J)}_K \to
T_{\vectgr{\Omega}_{n}}\mathfrak{M}^{(J)}_K$ is the
parallel-transport mapping~\cite{Absil:OptimManifolds:08}. Its inclusion
in~\eqref{eq:parallel.trans} is necessary, because tangent
spaces vary across points on manifold $\mathfrak{M}^{(J)}_K$, and
therefore the momentum $\vectgr{\Pi}_{n-1}$ cannot be
directly combined with the current gradient without
alignment to appropriate tangent space. This construction
ensures that momentum accumulation remains intrinsic to the
geometric structure of $\mathfrak{M}^{(J)}_K$, while the
retraction $R_{\vectgr{\Omega}_n}(\cdot)$ guarantees that
the update iterate remain on the considering manifold.
For the usual Euclidean
components of $\mathfrak{M}^{(J)}_K$, parallel-transport is the
identity mapping, while for $\PD^{D_s}$ and under the
affine-invariant metric, $\forall \vect{X} \in
T_{\vect{C}_{n-1}} \PD^{D_s}$,
\begin{equation*}
    \varphi_{\vect{C}_{n-1} \to \vect{C}_{n}}(\vect{X})
    \coloneqq \vectgr{\Phi}_n \vect{X}
    \vectgr{\Phi}_n^\intercal \in T_{\vect{C}_{n}} \PD^{D_s}\,,
\end{equation*}
where $\vectgr{\Phi}_{n} \coloneqq \vect{C}^{1/2} \Exp \big[
    (1/2) \Log\big( 
    \vect{C}_{n-1}^{-1/2} \vect{C}_n \vect{C}_{n-1}^{1/2}
    \big)
    \big]$, with $\Exp(\cdot)\,,\Log(\cdot)$ being the
matrix exponential and logarithm
mappings~\cite{RobbinSalamon:22, Hall:Lie:03}, while the
retraction mapping $R_{\vectgr{\Omega}}(\cdot)$ is chosen 
following~\eqref{eq:retractions}.

\section{Structures of experience replay
buffers}\label{sec:er}
To enhance data efficiency and stabilize learning, several
strategies for managing and sampling from experience replay
buffer are considered. In~\cite{mnih13dqn}, all stored
transitions $(\vect{s}, a, r, \vect{s}') \in
\mathcal{B}_n$ are treated equally and sampled uniformly from
the buffer, irrespective of their contribution to the
current objective. While this strategy yields unbiased
estimates, it does not differentiate between informative and
less informative experiences. In quest to exploit the
structure of accumulated experience and improve learning
efficiency, several alternative replay strategies that depart from
uniform sampling are considered. These strategies can be
classified into two main categories: distribution-based and
diversity-based.

\subsection{Distribution-based experience replay}
\label{subsubsec:distribution-er}
Distribution-based experience replay samples transitions
from the buffer by imposing a customized PDF onto the
buffer rather than uniformly, with prioritized experience replay (PER)~\cite{PER}
as one of the most celebrated strategies.
PER improves the
efficiency of experience replay by sampling transitions from
the buffer according to their estimated learning importance. In particular, PER considers
transitions $(\vect{s}_b[n], a_b[n], r_b[n], \vect{s}'_b[n])
\in \mathcal{B}_n$, for $b=\overline{1,B}$ with large
TD-error $\delta_b[n] \coloneqq |Q_n(\vect{s}_b[n],a_b[n]) - r_b[n] -
\alpha Q_n(\vect{s}_b'[n], \mu_n(\vect{s}'_b[n]))|$, more
informative, and allocates them higher sampling probability, as these
indicate regions where current estimate of Q-function is
inaccurate. 

Two common variants of PER are proportional PER and
rank-based PER. In proportional PER, priorities are
determined directly as $P_b[n] \coloneqq \delta_b[n] +
\epsilon$, where $\epsilon > 0$; 
while rank-based PER follows a less aggressive approach by
assigning $P_b[n] \coloneqq 1 / \rank_b[n]$, where
$\rank_b[n]$ denotes the position of transition
$(\vect{s}_b[n], a_b[n], r_b[n], \vect{s}'_b[n])$ in the
descended ordering of $\Set{\delta_{b'}[n]}_{b'=1}^{B}$. 
Then each transition is assigned a sampling
probability of 
\begin{equation}
    p_b[n] \coloneqq p(\vect{s}_b[n], a_b[n], r_b[n], \vect{s}'_b[n])
    \coloneqq
    \frac{(P_b[n])^{\alpha_{\text{PER}}}}{\sum_{b'=1}^{B}
    (P_{b'}[n])^{\alpha_{\text{PER}}}} \,,
\end{equation}
where exponent $\alpha_{\text{PER}}$ determines the degrees
of prioritization. Note that, $\alpha_{\text{PER}}=0$
boils to the uniform sampling.

To correct for the bias this non-uniform distribution introduces, conventional PER~\cite{PER} weights
each sampled transition by an importance sampling (IS)
factor $w_b[n] \coloneqq \left[ \, ({1}/{B}) \big/
  {p_b[n]} \, \right]^{\beta_{\textnormal{PER}}} $,
where $\beta \in (0,1]$ controls the degree of
bias correction ($\beta_{\textnormal{PER}}=1$ fully
compensates for the non-uniform sampling). The IS-corrected
BR loss replaces~\eqref{eq:BRM.param} with
\[
    L_{\mu_n}^{\textnormal{IS}}(\vectgr{\Omega}; Q_n,
    \mathcal{D}_n) \coloneqq \frac{1}{T} \sum_{t=1}^{T}
    w_{b_t}[n] \Big[ \sum_{k=1}^{K} \xi_k(a_t^{(n)})
    \mathscr{G}_k(\vect{s}_t^{(n)}) - r_t^{(n)} - \alpha
    Q_n(\vect{s}_t'^{(n)}, \mu_n(\vect{s}_t'^{(n)}))
    \Big]^2 \,,
\]
where $b_t$ denotes the buffer index of the $t$-th
transition sampled into $\mathcal{D}_n$.

While $L_{\mu_n}^{\textnormal{IS}}$ ensures the unbiasedness of the stochastic gradient-descent updates
in standard Euclidean settings, its application within the proposed framework introduces a critical
structural incompatibility.  Specifically, GMM-QFs in~\eqref{eq:GMMQF} are optimized via Riemannian
gradient-descent over their corresponding parameter manifolds, where parameter updates follow geodesic
paths rather than Euclidean translations. Applying the weights $w_b[n]$ in this setting distorts the
manifold geometry, compromising Riemannian retractions on $\mathfrak{M}_K^{(J)}$ and degrading the
validity of manifold-based updates.  This incompatibility is particularly severe for
S-GMM-QFs~\eqref{eq:SGMMQF}, where the manifold structure is intricate and sensitive to perturbations,
resulting in erratic gradient signals and compromised convergence. This sensitivity calls for a remedy
that avoids IS weighting altogether rather than attempting to correct it within the Riemannian setting.


%

To overcome this structural incompatibility, a
frequency-based priority decay mechanism is proposed as an
alternative to IS weight compensation. Rather than modifying
the objective loss $L_{\mu_n}(\cdot; Q_n, \mathcal{D}_n)$ to correct
for sampling bias, the proposed mechanism directly
rebalances the sampling distribution itself, preserving the
original structure of Riemannian optimization problem. In
particular, for each transition $(\vect{s}_b[n], a_b[n],
r_b[n], \vect{s}'_b[n]) \in \mathcal{B}_{n}$, a dedicated
counter $\mathscr{f}_b[n]$ tracks its cumulative selection
frequency. A sampling threshold $\mathscr{F}_{\text{s}} \in
\IntegerPP$ is imposed as a saturation constraint, once a
transition has been sampled more than
$\mathscr{F}_{\text{s}}$ times, a monotonic decay factor is
applied to its priority according to 
\begin{equation} \label{eq:fair.decay}
    P_{b}[n] \coloneqq P_{b}[n-1] \cdot \lambda^{\max(0,
    \mathscr{f}_b[n] - \mathscr{F}_{\text{s}})} \,,
\end{equation}
where $\lambda \in (0,1)$ is a decay rate controlling how
aggressively over-sampled transitions are de-prioritized.
This strategy progressively mitigates the dominance of
frequently revisited transitions, effectively flattening the
overly peaked sampling distributions often induced by
standard PER~\cite{PER}. By re-balancing the utilization of
experiences through priority penalty rather than IS
weights-based loss scaling, this proposed approach maintains
a stable optimization landscape while ensuring that RL
agent remains exposed to a diverse spectrum of environmental
transitions.

%

\subsection{Diversity-based experience replay}
\label{subsubsec:diversity-er}
While distribution-based strategies such as PER prioritize
individual transitions by their estimated learning importance,
they may inadvertently concentrate sampling in narrow regions
of the feature space where TD errors are elevated, neglecting
structurally distinct transitions that are equally relevant
for generalization. This motivates a complementary class of
\textit{diversity-based}\/ replay strategies, which aim to
maintain broad coverage of the experienced feature space
within sampling dataset $\mathcal{D}_n$.

A prominent representative is the efficient diversity-based
experience replay (EDER)~\cite{eder}, which quantifies
diversity of transitions in $\mathcal{B}_n$ via
determinantal point processes~\cite{dpp}. In particular, 
whenever a new transition arrives, it is accepted to the
buffer with a cosine-distance-based probability to reduce
redundancy; the retained transitions are stored sequentially
and partitioned into non-overlapping windows of
consecutive transitions with a user-defined length (set to
$5$ in the experiments of~\cref{sec:tests}). Each
window is scored based on the
determinant of the Gram matrix of pairwise cosine
similarities between its transition. Each transition
inherits its window's score, and is sampled proportionally
to these scores, biasing sampling toward windows whose
transitions collectively cover diverse regions of
state-action space. While effective in several settings,
EDER treats transitions regardless of their TD error,
potentially ignoring the informative ones, especially in
environments with delay-reward effects. A simpler
unsupervised-learning-based alternative via offline K-means
clustering is proposed~\cite{cerLi22}; however operating
K-means periodically over the whole buffer is incompatible
with the single-step streaming setting of~\cref{algo:gmmq}.

This paper follows a different strategy via online
clustering, which promotes diversity by enforcing an equal
sampling quota across clusters, while retaining
informativeness via TD-error-based priority within each
cluster. Given a user-defined $C \in \IntegerPP$, the set of
centroids $\mathfrak{C}[n] \coloneqq \Set{\phi_1[n], \dots,
\phi_C[n]}$ is maintained over a feature representation $\psi_n
\coloneqq \psi(\vect{s}_n, a_n, \vect{s}'_n)$
of each transition. This paper adopts, as one convenient
choice among several possible feature representations, the
concatenation 
\begin{equation*}
    \psi_n \coloneqq [\vect{s}^{\intercal}_n \quad
    \vect{e}^{\intercal}_{a_n} \quad
    {\vect{s}'_n}^{\intercal}]^{\intercal} \in \Real^{2D_s + N_a} \,,
\end{equation*}
where $\vect{e}_{a_n} \in \{0,1\}^{N_a}$ denotes the one-hot
indicator vector of action $a_n$, defined entry-wise by
$[\vect{e}_{a_n}]_{a'} \coloneqq \vect{1}[a' = a_n]$ for
$a' \in \overline{1,N_a}$ ($\vect{1}[\cdot]$ is the
indicator function). 
Every incoming transition $(\vect{s}_n, a_n, r_n,
\vect{s}'_n)$ is
assigned to its nearest centroid, 
\begin{equation}
    c^* \coloneqq \Argmin_{c \in \overline{1,C}}
    d\,(\psi_n, \phi_c[n]) \,,
\end{equation}
where $d(\cdot, \cdot)$ is any user-defined metric; this
paper adopts the simple Euclidean distance.
Only the chosen
centroid is nudged toward the new feature via an exponential
moving average, 
\begin{equation}\label{eq:centroid.update}
    \phi_{c^*}[n+1] \coloneqq (1 - \eta)\, \phi_{c^*}[n] +
    \eta\, \psi_n \,,
\end{equation}
with $\eta \in (0,1)$ to keep the clusters tracking the
evolving distribution of experience without revising the
entire buffer. This online update makes the strategy
compatible with the single-step streaming setting
of~\cref{algo:gmmq}, unlike the periodic K-means
in~\cite{cerLi22}.

At sampling time, an equal quota of $\lfloor {T}/{C} \rfloor$
transitions is drawn from every cluster to guarantee that no explored
region of the state-action space is left out of the sampled
set $\mathcal{D}_n$. Regardless of clusters, transitions
$(\vect{s}_b[n], a_b[n], r_b[n], \vect{s}'_b[n]) \in \mathcal{B}_n$ are
still ranked by their TD-error-based priority $P_b[n]$ as in 
PER~\cite{PER}, but they also carries a counter
$\mathscr{f}_b[n]$ (\cref{subsubsec:distribution-er}),
giving rise to a fairness-adjusted priority
\[
    P_b^{\textnormal{fair}}[n] \coloneqq \frac{P_b[n]}{1 +
    \mathscr{f}_b[n]} \,.
\]
Given the assigned cluster $c$ of transitions
$(\vect{s}_b[n], a_b[n], r_b[n], \vect{s}'_b[n])$, its
within-cluster sampling probability is 
\[
    p_b[n] = p(\vect{s}_b[n], a_b[n], r_b[n], \vect{s}'_b[n])
    \coloneqq
    \frac{(P_b^{\textnormal{fair}}[n])^{\alpha_{\text{PER}}}}{\sum_{b'
    \in \text{cluster
    }c}(P_{b'}^{\textnormal{fair}}[n])^{\alpha_{\textnormal{PER}}}}
    \,,
\]
using the same exponent $\alpha_{\textnormal{PER}}$ as
in~\cref{subsubsec:distribution-er}.

Unlike the threshold-based decay
proposed in~\cref{subsubsec:distribution-er}, this approach applies
continuous priority rescaling from the first re-sample
onward, eliminating the hyperparameters
$\mathscr{F}_{\text{s}}$ and $\lambda$. This simplification
is essential because fairness constraints
must be enforced locally within each cluster pool rather
than globally across the entire buffer, making per-cluster
adaptation more tractable than tuning global decay
parameters.

\section{Numerical tests} \label{sec:tests}
The numerical tests are organized around this paper's
contributions. \Cref{subsec:tests-dense} evaluates
performance of dense GMM-QFs in terms of both data and
computational efficiency. \Cref{subsec:tests-sparse,subsec:tests-interpret} examine the performance and
interpretability gains of sparsification via Hadamard
overparametrization, and \cref{subsec:tests-buffers}
assesses the experience-replay designs for~\cref{algo:gmmq}.

\subsection{Experimental settings}\label{subsec:tests-settings}
Benchmark RL tasks with finite action spaces---the
``Lunar lander'' and
the ``Flappy bird''---are selected to
validate the proposed framework.
\begin{figure}[t]
    \centering
    \subfloat[Lunar lander]{
        \includegraphics[height=.15\paperheight]{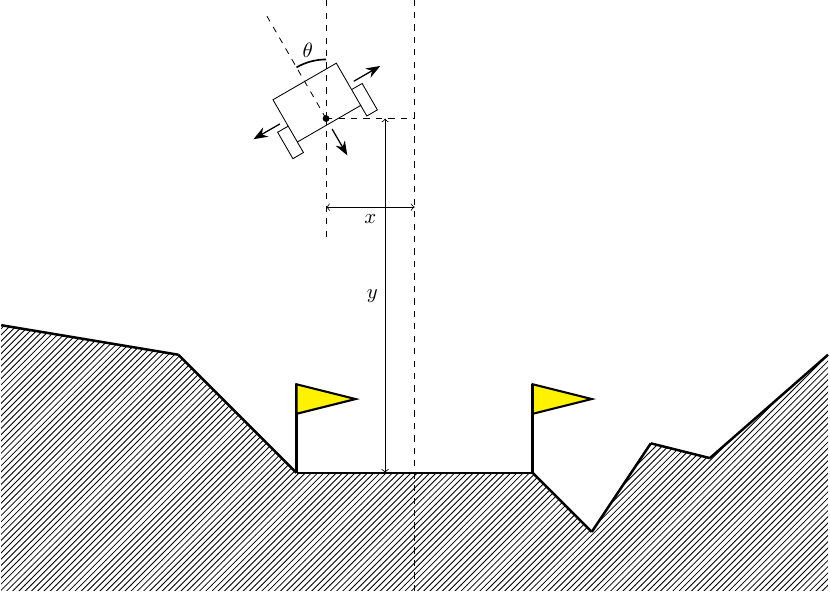}
        \label{fig:lunar}
    }
    \qquad
    \subfloat[Flappy bird]{
        \includegraphics[height=.15\paperheight]{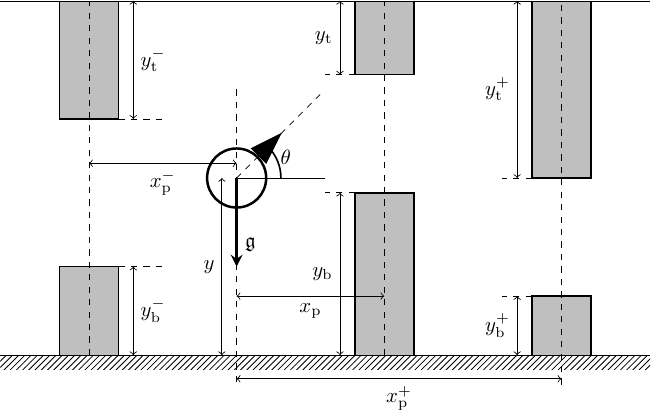}
        \label{fig:flappy}
    }
    \caption[]{RL benchmarks used in~\cref{sec:tests}.}
\end{figure}

The objective of lunar lander task is to control a spacecraft to
land softly on a designated pad using thruster firings
(see~\cref{fig:lunar}). The
state (observation) is an \num{8}-dimensional vector of the
lander's position, linear velocities, angle, angular
velocity and two binary leg-contact indicators. There are
\num{4} available options of action: do nothing, fire the
left engine, fire the main engine (downwards) and fire the
right engine. The reward is dense, increasing as the lander
approaches the desired area at low speed and near-horizontal
orientation. In particular, a reward of $r(\vect{s},a) = 10$
is given per leg in ground-contact; firing engines
incurs a negative reward (penalty) $r(\vect{s},a)=-0.3$ for
main engine, and $r(\vect{s},a)=-0.03$ for side engines. A
terminal bonus of $r(\vect{s},a)=100$ is awarded for a safe
landing, or $r(\vect{s},a)=-100$ for crashing. Episodes
begin with the lander at the top of the viewport subject to
random initial force, and terminate upon landing, crashing,
or leaving the observable screen.

The Flappy Bird task requires controlling a bird to
navigate an endless sequence of pipes under gravity, with
the goal of passing through as many gaps as possible without
collision or leaving the screen boundaries
(see~\cref{fig:flappy}). The state space is 
\num{12}-dimensional, which observes the horizontal
positions and top and bottom vertical positions of the three
nearest upcoming pipes, together with the bird's vertical
position, vertical velocity, and rotation. There are \num{2}
discrete actions: do nothing or flap. The agent receives
a reward of $r(\vect{s},a) = 0.1$ per action for staying
alive and $r(\vect{s},a) = 1$ for each pipe
successfully passed; it is penalized by $r(\vect{s},a) =
-0.5$ for touching the top of the screen and $r(\vect{s},a)=
-1$ upon collision with a pipe or the ground. An episode
terminates when the bird collides with a pipe or the ground,
or leaves the observable screen. 
The
combination of
continuous dynamics, gravity, and strict timing constraints
makes this environment considerably harder than classical
control tasks.

The dense GMM-QFs are compared against
the popular DeepRL methods of DQN~\cite{mnih13dqn} and
PPO~\cite{PPO}; while S-GMM-QFs are compared
against sparsified variants of the DeepRL: \begin{enumerate*}[label=\textbf{(\roman*)}]
\item pruned (dense-to-sparse)~\cite{dense2sparse},
\item SET~\cite{mocanu17scalable}, which updates the sparse network according to a cosine decay criterion~\cite{cosine}, and
\item RigL~\cite{rigl}, which follows the same dynamic sparse-training procedure as SET but regrows connections using the gradient signal rather than at random.
\end{enumerate*}
The DQN~\cite{mnih13dqn} baseline employs double (D)DQN
architecture~\cite{hasselt16ddqn} with prioritized
experience replay~\cite{PER}, and is referred to simply as
DQN hereafter. The DQN learning curves
of~\cref{fig:denses} report, for each benchmark, the
best-performing configuration from the sweep
of~\cref{tab:hyperparams-deeprl}: on lunar lander, a
$2\times 128$ network \emph{without} IS-weight correction
($\beta_{\text{PER}}=0$), which was found to outperform both
the larger networks and the annealed-$\beta_{\text{PER}}$
setting; on flappy bird, a $2\times 512$ network with the
annealed $\beta_{\text{PER}}$. The stronger variant is thus
always reported in favor of the baseline. All remaining DQN
experiments employ the annealed $\beta_{\text{PER}}$
of~\cref{tab:hyperparams-deeprl}.
PPO~\cite{PPO} employs two neural networks---one for the
Q-function (critic) and another for the stochastic policy
(actor)---whereas DQN, similarly to~\cref{algo:gmmq}, models
only the Q-function. In all sparsified PPO variants,
sparsity is applied only to the critic network, while the
actor network remains dense; reported parameter counts
(in~\cref{fig:model-size}) include both networks.
Hyperparameters of~\cref{algo:gmmq} are listed
in~\cref{tab:hyperparams-sgmmqf}, while those of the DeepRL
baselines and their sparsified variants
follow~\cite{sparseDRL} and are collected
in~\cref{tab:hyperparams-deeprl}
of~\ref{subsec:appendix-deeprl-settings}.

The performance metric (vertical axis
in subsequent figures) represents the cumulative reward obtained by the agent until the task is terminated under the
learned policy $\mu_n$, with $n$ denoting the index of the
incoming data (environment transitions) as well as the index of operation for
\cref{algo:gmmq} (horizontal axis). To demonstrate the effect of sparsification, performance is also plotted vs.\ the
number of model's learnable parameters.  Every configuration,
of both \cref{algo:gmmq} and the DeepRL baselines, is
trained over \num{10} independent seeds. Algorithms are
evaluated every \num{5000} incoming data,
with
results averaged over \num{20} independent episodes using a
separate testing emulator for each environment and
aggregated across the \num{10} seeds.
Comparisons against kernel-based and
distributional RL baselines are omitted here: on
flappy bird, \cite{minh25tsp} already reports that GMM-QFs
substantially and consistently outperform these methods,
while \cite{Vu:eusipco:25} documents a similar trend on
simpler benchmark environments. Given the significant
computational cost of these baselines and the consistency of
these prior findings, they are not repeated on lunar lander in
the present study.

\begin{table}[t]
    \centering
    \caption{Hyperparameters of~\cref{algo:gmmq} (S-GMM-QFs).}
    \label{tab:hyperparams-sgmmqf}
    \begin{tabular}{lcc}
        \toprule
        \multirow{2}{*}{Hyperparameter} & \multicolumn{2}{c}{Value} \\
        \cmidrule(l){2-3}
        & Lunar lander & Flappy bird \\
        \midrule
        Number of Gaussian components $K$ & \multicolumn{2}{c}{$\{20, 50, 100, 500\}$} \\
        Number of Hadamard factors $J$ & \multicolumn{2}{c}{$3$} \\
        Buffer capacity $B$ & \multicolumn{2}{c}{$10^5$} \\
        Mini-batch size $T$ & \multicolumn{2}{c}{$64$} \\
        RAdam~\cite{RAdam} learning rate $\gamma$ & \multicolumn{2}{c}{$10^{-3}$} \\
        RAdam decay rates $\beta_1, \beta_2$ & \multicolumn{2}{c}{$0.9, 0.999$} \\
        Regularization coefficient $\rho$ & $\{0.001, 0.005, 0.01, 0.05\}$ &
        $\{10^{-5}, 10^{-4}, 10^{-3}, 10^{-2}\}$ \\
        \midrule
        \multicolumn{3}{l}{\textit{Priority-based sampling (PER, fair, clustering)}} \\
        Priority exponent~\cite{PER} $\alpha_{\text{PER}}$ & \multicolumn{2}{c}{$0.6$} \\
        \midrule
        \multicolumn{3}{l}{\textit{Fair-decay buffer (\cref{subsubsec:distribution-er})}} \\
        Sampling threshold $\mathscr{F}_{\textnormal{s}}$
        in~\eqref{eq:fair.decay} & \multicolumn{2}{c}{$20$} \\
        Decay rate $\lambda$ in~\eqref{eq:fair.decay} & \multicolumn{2}{c}{$0.5$} \\
        \midrule
        \multicolumn{3}{l}{\textit{Clustering buffer (\cref{subsubsec:diversity-er})}} \\
        Number of clusters $C$ & \multicolumn{2}{c}{$5$} \\
        Centroid update rate $\eta$
        in~\eqref{eq:centroid.update} & \multicolumn{2}{c}{$0.05$} \\
        \bottomrule
    \end{tabular}
\end{table}


\subsection{Dense models}\label{subsec:tests-dense}
Performance of dense models is recorded
in~\cref{fig:denses}. Overall,~\cref{algo:gmmq} outperforms
other competitors, with faster improvement over the same
number of observed transitions. 
In~\cref{fig:lunar-tba},
dense GMM-QFs with $K=500$ slightly outperforms their $K=100$
counterpart, consistent with the expectation that a larger
number of Gaussian components offers richer representational
capacity and thus a better approximation of the Q-function.
In~\cref{fig:flappy-tba}, however, this trend does not hold:
$K=100$ outperforms both $K=50$ and $K=500$, with $K=500$
substantially underperforming the two smaller models,
exhibiting higher variance and less stable behavior over the
long horizon.

On the other hand, deep RL
approaches exhibit varying behavior across the two
tasks considered. For lunar lander,
DQN~\cite{mnih13dqn, hasselt16ddqn} and PPO~\cite{PPO} improve steadily
during training.
However, both methods fail to achieve
competitive performance in flappy bird environment:
DQN~\cite{mnih13dqn, hasselt16ddqn} reaches only a suboptimal behavior,
while PPO~\cite{PPO} show little
observable improvement over the course of training.
This is partly attributed to the reliance of on-policy methods
on the quality of data generated by the
current policy: in environments with delayed action effects
such as flappy bird, the policies may not be assessed
efficiently under rollouts of limited length. Moreover,
PPO~\cite{PPO} is designed to learn from multiple rollout
trajectories, making it better suited to simulation-based
training scenarios with a world model available, rather than
a pure online setting where only a single trajectory can be
generated through direct interaction with the environment.

\begin{figure}[t]
    \centering
    \subfloat[Lunar lander]{
        \includegraphics[width=.45\textwidth]{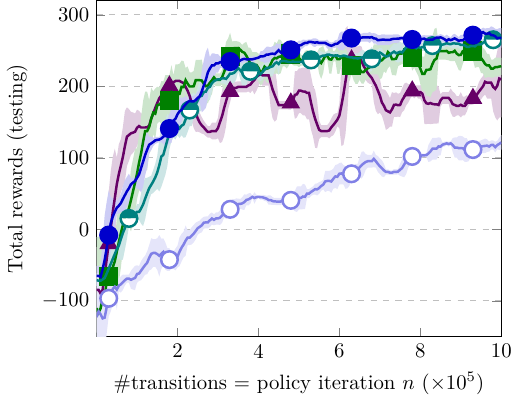}
        \label{fig:lunar-tba}
    }
    \quad
    \subfloat[Flappy bird]{
        \includegraphics[width=.45\textwidth]{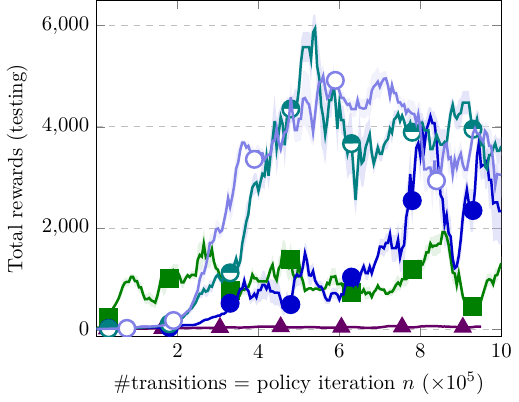}
        \label{fig:flappy-tba}
    }
    \caption[]{Performance of dense models. 
    Results are smoothened using moving interquartile mean,
    window of \num{5e4}. Curve
    markers:~\cref{algo:gmmq}:
    $K=50$:~\quicksymbol{proposed!50}{mark=*, mark
    options={fill=white, scale=1.5, line width=1pt}},
    $K=100$:~\quicksymbol{teal}{mark=halfcircle*, mark
    options={scale=1.5, line width=1pt}},
    $K=500$:~\quicksymbol{proposed}{mark=*, mark
    options={scale=1.5, line width=1pt}};
    DQN~\cite{mnih13dqn, hasselt16ddqn}:~\quicksymbol{dqn}{mark=square*,
    mark options={scale=1.5}}.
    PPO~\cite{PPO}:~\quicksymbol{ppo}{mark=triangle*,
    mark options={scale=2}}; 
    }
    \label{fig:denses}
\end{figure}

\begin{figure}[t]
    \centering
    \subfloat[Lunar lander]{
        \includegraphics[width=.45\textwidth]{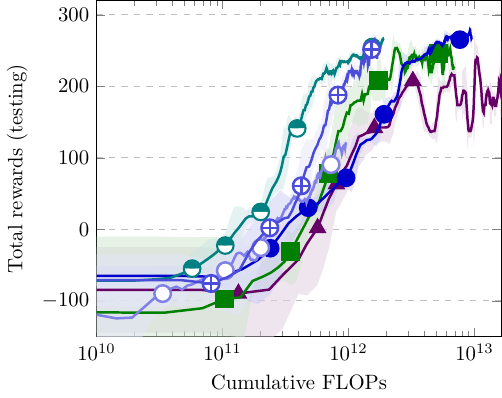}
        \label{fig:lunar-flops}
    }
    \quad
    \subfloat[Flappy bird]{
        \includegraphics[width=.45\textwidth]{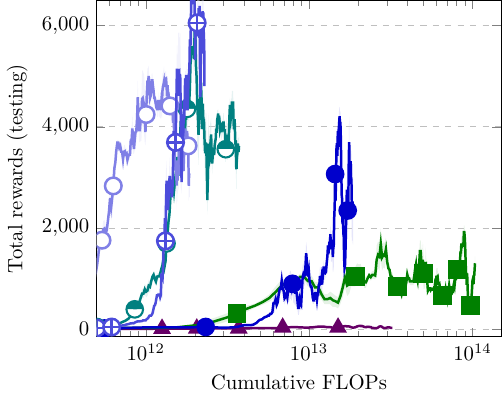}
        \label{fig:flappy-flops}
    }
    \caption[]{Performance of dense models against
    cumulative FLOPs along training. S-GMM-QFs: 
    \tikz[baseline=-0.5ex]{
        \draw[line width=1pt, color=proposed!70, 
        opacity=0.5] (-0.2,0) -- (0.2,0);
        \draw[color=proposed!70, opacity=1]
        plot[mark=quartedcircle, mark options={line width=1pt,
        scale=1.5, fill=white}] coordinates {(0,0)};
    }. 
    Other curve
    markers follow ones of~\cref{fig:denses}.
    }
    \label{fig:denses-flops}
\end{figure}

A FLOP (floating-point operation) is a single elementary
arithmetic operation---such as a multiplication or an
addition---and serves as a hardware-independent proxy for
computational cost. \cref{fig:denses-flops} reports the same
performance curves as~\cref{fig:denses}, but plotted against
cumulative FLOPs along training rather than the number of observed
transitions. For~\cref{algo:gmmq}, FLOPs per policy-evaluation
step are dominated by the Riemannian retraction
\eqref{eq:retractions} on the $K$ covariance matrices, whose
cost scales as $\mathcal{O}(K D_s^3)$, combined with the
$\mathcal{O}(K^2 D_s)$ cost of the Euclidean gradient
computations for the mixture weights and means (a detailed
derivation is given in~\cite[Appendix H]{minh25tsp}); for DQN and PPO,
FLOPs follow the standard convention that a forward and
backward pass together cost a small constant multiple of the
number of network parameters.
It is worth stressing that this computational burden grows
significantly with the state dimensionality $D_s$, since each
Gaussian component maintains a $D_s \times D_s$ covariance
matrix whose inversion---required for evaluating the Gaussian
densities---and Riemannian retraction cost
$\mathcal{O}(D_s^3)$ per component. Consequently,
high-dimensional inputs, such as pixel data or LiDAR
measurements with hundreds of features, are currently beyond
the reach of GMM-QFs, and the benchmarks of moderate $D_s$
adopted in this study reflect this operating regime. Possible
remedies, including state-representation via an online
encoders, are left for future work
(see~\cref{sec:conclusion}).

Under this metric, the ordering observed
in~\cref{fig:denses} is reshaped considerably. On lunar
lander, GMM-QFs with $K=50$ and $K=100$ reach performance
comparable to $K=500$ using one to two orders of magnitude
fewer cumulative FLOPs, since the dominant $K D_s^3$ cost of
the manifold retraction grows directly with the number of
components; DQN and PPO fall in between, requiring FLOPs
comparable to or exceeding the $K=500$ GMM-QF to reach
similar performance. The effect is more pronounced on flappy
bird: $K=50$ and $K=100$ again reach their (respectively
higher) plateaus at comparatively low FLOP counts, whereas
$K=500$ requires close to two orders of magnitude more
compute to reach a considerably lower level of performance,
reflecting the instability already noted
in~\cref{fig:flappy-tba}. DQN reaches a modest reward only
after substantial compute, while PPO fails to improve
appreciably even at the highest FLOP counts considered,
indicating that its poor performance on this benchmark is not
merely a matter of insufficient compute, but a more fundamental
mismatch between the algorithm and the online,
single-trajectory setting of this study. Taken together, these
results suggest that transitions-based sample efficiency
and FLOP-based computational efficiency need not coincide,
and that the leaner S-GMM-QFs are considerably more
compute-efficient than their $K=500$ counterpart, motivating
the sparsification developed in~\cref{sec:sparse-gmmq}.

\subsection{Effect of sparsification}\label{subsec:tests-sparse}
The computational savings anticipated
in~\cref{subsec:tests-dense} are confirmed by the S-GMM-QFs,
whose learning curves are also included
in~\cref{fig:denses-flops}. Although the S-GMM-QFs start from
the same $K=500$ pool of components as its dense
counterpart, the progressive annihilation of mixture weights
along training reduces the associated retraction and
gradient computations, bending its cumulative FLOPs
trajectory well below that of the dense GMM-QFs of
$K=500$ components. On lunar lander, the sparse model
reaches the optimal performance plateau of the dense $K=500$
at a fraction of its cumulative FLOPs, comparable to the
computational budget of the much smaller dense models. The
advantage is more visible on flappy bird, as S-GMM-QFs
attains the highest reward curve earlier in cumulative
FLOPs than the best dense GMM-QFs ($K=100$), and much
earlier than DQN~\cite{mnih13dqn, hasselt16ddqn}, while the $K=500$ never
recovers its cost. Sparsification is thus not merely a
model-compression device applied after training, it acts
during training, converting representational redundancy of a
large initial pool into performance early, then discarding
non-impactful components before their computational cost
accumulates. 

The distinct training regimes behind these sparsification
schemes are visualized in~\cref{fig:nparams-vs-steps}, which
traces the number of learnable parameters along training on
both benchmarks. The dynamic sparse-training methods (SET,
RiGL) maintain the parameter count fixed at initialization,
appearing as flat lines. Pruning and S-GMM-QFs, by contrast,
are both of the dense-to-sparse kind
(cf.~\cref{subsec:hadamard}), yet differ in what drives the
descent: pruning reduces the parameter count according to its
prescribed cubic schedule, terminating exactly at the
user-specified sparsity level, whereas the parameter count of
S-GMM-QFs evolves solely under the implicit bias of the
regularized objective, with the final model size emerging
from the choice of $\rho$ rather than being fixed \emph{a
priori}. This emergent decay is also the mechanism behind the
compute savings observed in~\cref{fig:denses-flops}: as
mixture weights are annihilated, the per-step retraction and
gradient costs shrink accordingly.

\begin{figure}[t]
    \centering
    \subfloat[Lunar lander]{
        \includegraphics[width=.45\textwidth]{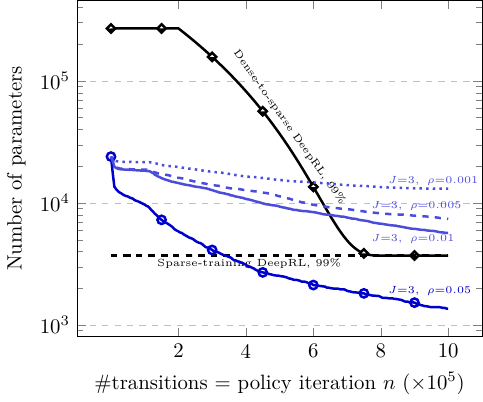}
    }
    \quad
    \subfloat[Flappy bird]{
        \includegraphics[width=.45\textwidth]{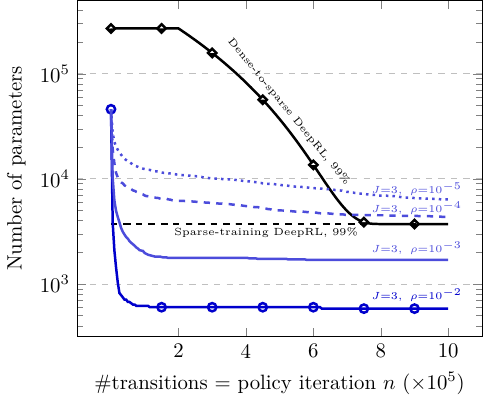}
    }
    \caption[]{Change in number of learnable parameters of
    S-GMM-QFs and sparse DeepRL (DQN~\cite{mnih13dqn,
    hasselt16ddqn},
    PPO~\cite{PPO}) along training. Curve markers:
    \cref{algo:gmmq} with
    Hadamard-overparametrization:\quicksymbol{proposed}{mark=o,
    mark options={line width=1pt}},
    \quickline{proposed!70}{1pt},\quickstyledline{proposed!70}{1pt}{dashed},\quickstyledline{proposed!70}{1pt}{dotted};
    sparse DeepRL:
    pruning~\cite{dense2sparse}:\quicksymbol{black}{mark=halfsquare*, mark
    options={line width=1pt}},
    SET~\cite{mocanu17scalable},
    RiGL~\cite{rigl}:\quickstyledline{black}{1pt}{dashed}.
    }
    \label{fig:nparams-vs-steps}
\end{figure}

Having established the computational benefit, 
S-GMM-QFs are next compared against the pruning, SET, and RigL
sparsification strategies for deep RL networks introduced
in~\cref{sec:intro}, following the taxonomy and
implementations benchmarked by~\cite{sparseDRL}. Unlike the
Hadamard-induced sparsification of S-GMM-QFs, sparsity in all three methods is
imposed directly on the network's connections according to a
user-specified sparsity level or pruning schedule fixed
beforehand, with no explicit link between a surviving
connection and an interpretable region of the state space.
\begin{figure}[t]
    \centering
    \subfloat[Lunar lander]{
        \includegraphics[width=.45\textwidth]{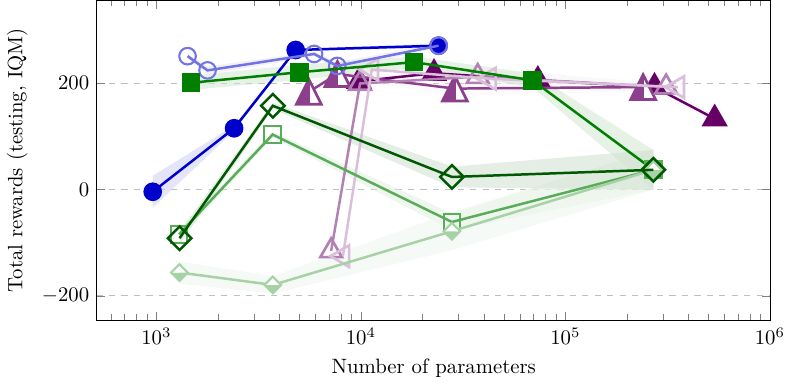}
        \label{fig:lunar-per-vs-params-tba}
    }
    \quad
    \subfloat[Flappy bird]{
        \includegraphics[width=.45\textwidth]{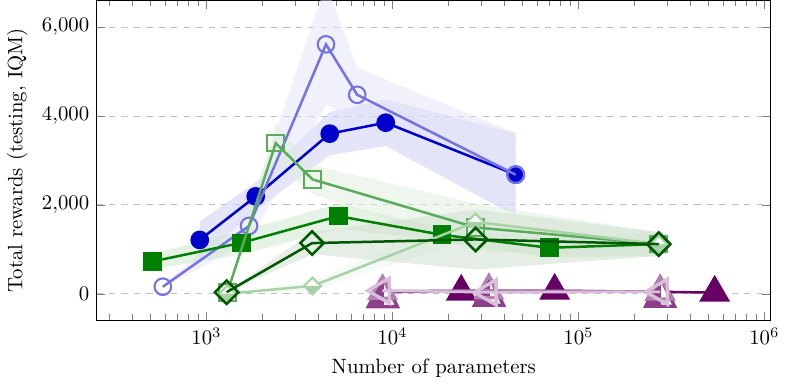}
        \label{fig:flappy-per-vs-params-tba}
    }
    \caption[]{Performance of models against parameter counts.
    Curve markers: \cref{algo:gmmq} (dense, $K$-tuning):
    \proposed, \cref{algo:gmmq} (sparse from $K=500$,
    Hadamard-overparametrization):
    \quicksymbol{proposed!50}{mark=o, mark
    options={scale=1.5, line width=1pt}};
    dense DQN~\cite{mnih13dqn, hasselt16ddqn} (model-tuning): \dqn,
    SET-DQN~\cite{mocanu17scalable}:
    \quicksymbol{dqn!65!white}{mark=square, mark
    options={scale=1.5, line width=1pt}},
    RiGL-DQN~\cite{rigl}:
    \quicksymbol{dqn!70!black}{mark=square, mark
    options={scale=1.5, rotate=45, line width=1pt}},
    Prune-DQN~\cite{dense2sparse}:
    \quicksymbol{dqn!35!white}{mark=halfsquare*, mark
    options={scale=2, line width=1pt}};
    dense PPO~\cite{PPO} (model-tuning): \ppo,
    SET-PPO~\cite{mocanu17scalable}:
    \quicksymbol{ppo!50!white}{mark=triangle, mark
    options={scale=2, line width=1pt}},
    RiGL-PPO~\cite{rigl}:
    \quicksymbol{ppo!25!white}{mark=triangle, mark
    options={scale=2, rotate=90, line width=1pt}},
    Prune-PPO~\cite{dense2sparse}:
    \quicksymbol{ppo!75!white}{mark=halftriangle, mark
    options={scale=2, line width=1pt}};
    }
    \label{fig:model-size}
\end{figure}

\Cref{fig:model-size} reports performance of evaluating
methods against the number of learnable parameters, with
dense models swept over their respective size hyperparameters
($K$ for~\cref{algo:gmmq}, network layer sizes for DeepRL
baselines) and sparse variants swept over sparsity levels.
From the figures, it could be observed that:
\cref{algo:gmmq} is the most parameter-efficient method, as
it attains its peak performance with much fewer parameters
than the amount that DeepRL baselines require to reach their
plateaus. Furthermore, no method improves monotonically with
model size: every curve peaks at an intermediate parameter
count and flattens or degrades beyond this threshold,
indicating that, in the online setting of this study,
additional capacity brings no benefit and may even hinder
the overall performance. 

The relative ordering of the DeepRL baselines, in contrast,
is strongly environment-dependent. In case of dense-reward
lunar lander benchmark, PPO~\cite{PPO} family is competitive
at large model sizes. Prune-PPO~\cite{dense2sparse} remains
functional throughout the low-parameter regime; whereas
SET-PPO~\cite{mocanu17scalable} and RiGL-PPO~\cite{rigl} are
non-functional below the threshold of \num{1e4} parameters
and jump abruptly to the performance level of dense
PPO~\cite{PPO}. A plausible explanation is that
dense-to-sparse training trains the critic densely before
gradually removing weights (un-wiring), so the actor is
always updated with a reliable value estimate, whereas the
dynamic sparse training methods (SET, RiGL) learn a sparse
critic from scratch, and the low-quality Q-function estimates
emerge early. On flappy bird, however, the entire PPO-family
fails at every size considered, even with a dense actor,
corroborating the mismatch between on-policy learning with
delayed-reward settings, which has been discussed
in~\cref{subsec:tests-dense}.

The DQN family, on the other hand, shows a consistent
pattern: dense DQN~\cite{mnih13dqn, hasselt16ddqn} is the most robust
baseline on lunar lander, holding a near-constant plateau
across model-size, yet on flappy bird it is overtaken by its
own sparse-training variants. Similar to PPO case,
Prune-DQN~\cite{dense2sparse} degrades the most gracefully in the
extreme low-parameter regime, where SET-DQN collapses.
Notably, SET-DQN behaves nonmonotonically across sparsity
levels on lunar lander. Inspection of the training curves
reveals that this is not a failure to learn: at sparsity
level $0.9$, SET-DQN quickly obtains optimal performance
before degrading in the late-training phase. Higher
sparsity levels, slower yet stable over the training horizon,
thereby appear superior.

Finally, the proposed S-GMM-QFs (sparsified from $K=500$)
matches its dense counterpart on lunar lander and even
surpasses on flappy bird benchmark, albeit with visible
large variance. This phenomenon where a sparse model
outperforms its dense parent is not unique to S-GMM-QFs, as
SET-DQN also exceeds dense-DQN on flappy bird benchmark.
However, the distinction lies in how the sparsity is
conducted. While SET~\cite{mocanu17scalable} mainly
focuses on the modification within a black-box neural
network, S-GMM-QFs with Hadamard overparametrization carries
a state-relevant interpretability, a mechanism examined
further in~\cref{subsec:tests-interpret}.

\subsection{Interpretability of the learned
representation}\label{subsec:tests-interpret}

Beyond favorable performance-parameter trade-offs,
S-GMM-QFs confer a further, qualitative
benefit: the surviving Gaussian components admit a direct
geometric interpretability in the state space. 

\begin{figure}[t]
    \centering
    \begin{minipage}{\textwidth}
    \centering
    \subfloat[Dense $K=500$]{
        \includegraphics[
        width=.2\textwidth]{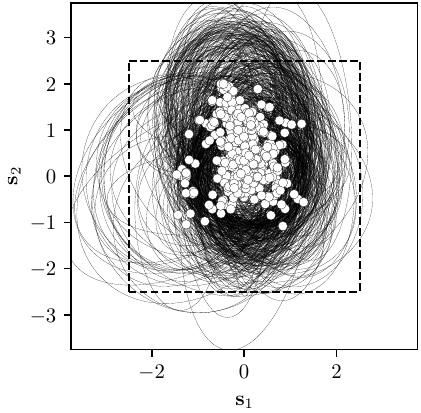}
    }
    \quad
    \subfloat[Dense $K=100$]{
        \includegraphics[
        width=.2\textwidth]{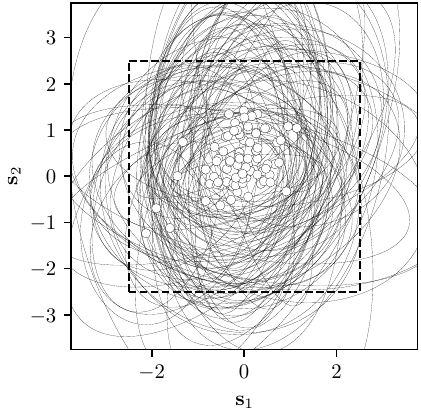}
    }
    \quad
    \subfloat[Dense $K=50$]{
        \includegraphics[
        width=.2\textwidth]{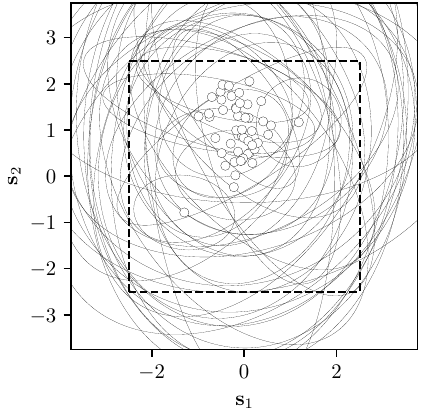}
    }
    \quad
    \subfloat[Dense $K=20$]{
        \includegraphics[
        width=.2\textwidth]{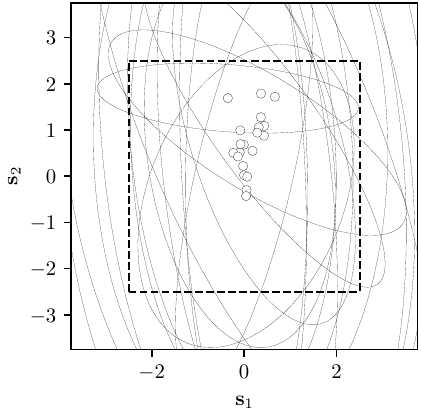}
    }
    \caption[]{Distribution of Gaussian components in dense
        GMM-QFs on lunar lander
        benchmark, shared across all actions. Multivariate GMMs are projected onto
        lander's position coordinates, with ellipses drawn at
        the 63.2\% confidence interval. Dashed box denotes the state domain.
    \dotsymbol{black}{mark=o, mark options={line width=1pt}}
    denotes the Gaussian centers.}
    \label{fig:visual-dense-gmm}
    \end{minipage}

    \vspace{2em}

    \begin{minipage}{\textwidth}
    \centering
    \subfloat[Action 1 (9 Gaussians)]{
        \includegraphics[
        width=.2\textwidth]{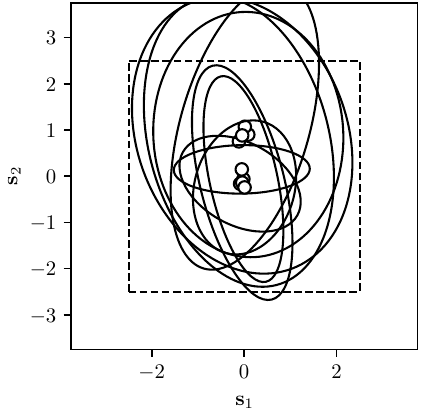}
    }
    \quad
    \subfloat[Action 2 (9 Gaussians)]{
        \includegraphics[
        width=.2\textwidth]{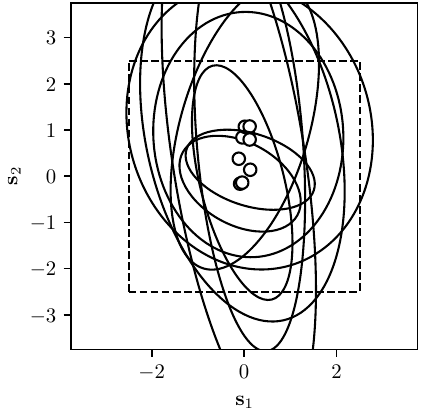}
    }
    \quad
    \subfloat[Action 3 (14 Gaussians)]{
        \includegraphics[
        width=.2\textwidth]{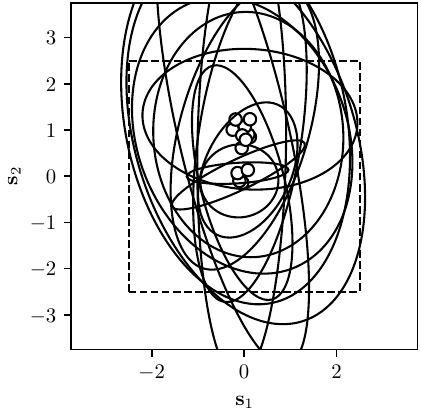}
    }
    \quad
    \subfloat[Action 4 (8 Gaussians)]{
        \includegraphics[
        width=.2\textwidth]{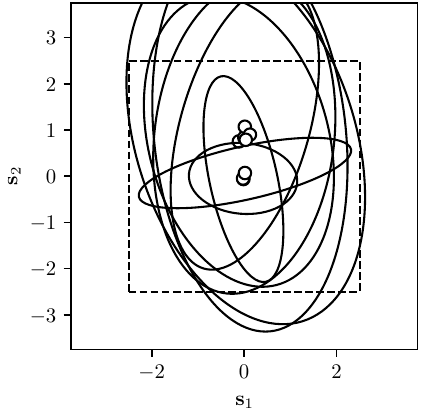}
    }
    \caption[]{Distribution of Gaussian components in
        S-GMM-QFs on lunar lander
        benchmark, sparsified from initial $500$ Gaussian components
        via Hadamard overparametrization ($J=3\,,\rho=0.05$).
        Each action induces a distinct distribution of
        active Gaussian
        components.
        Visualization setting follows ones
        of~\cref{fig:visual-dense-gmm}.
    }
    \label{fig:visual-sparse-gmm}
    \end{minipage}
\end{figure}

\begin{figure}[t]
    \centering
    \centering
    \subfloat[Random policy]{
        \includegraphics[width=.3\textwidth]{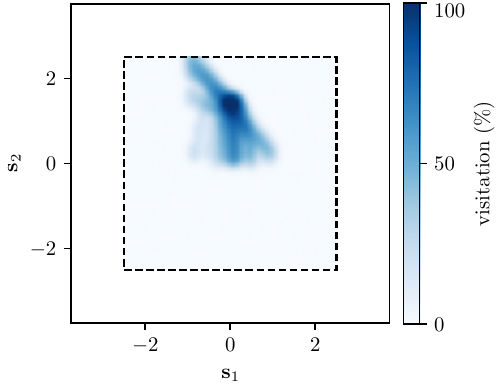}\label{fig:heatmap-random}
    }
    \quad
    \subfloat[Dense GMM-QFs ($K=500$)]{
        \includegraphics[width=.3\textwidth]{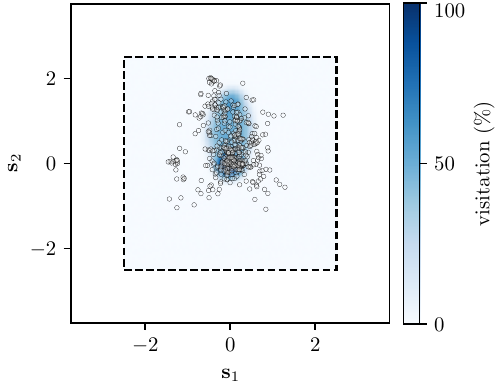}\label{fig:heatmap-dense}
    }
    \quad
    \subfloat[S-GMM-QFs ($K=500\,,J=3\,,\rho=0.05$)]{
        \includegraphics[width=.3\textwidth]{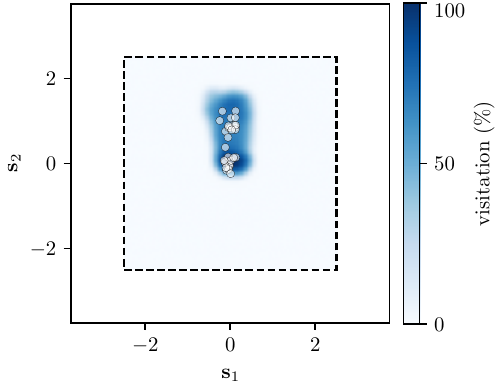}\label{fig:heatmap-sparse}
    }
    \caption[]{State visitation density and distribution of
    Gaussian component centers under three conditions of
    Q-function on
    lunar lander benchmark, projected onto position
    coordinates. Heatmaps are derived from trajectories
    induced from the corresponding policies.
    }
    \label{fig:heatmap-lunar}
\end{figure}


\cref{fig:visual-dense-gmm} shows the distribution of
active Gaussian components in dense GMM-QFs while
\cref{fig:visual-sparse-gmm} shows their
sparsified S-GMM-QF counterparts. In the dense setting,
all $K$ components remain active throughout training, with
weights spread across the mixture regardless of their
individual contribution to the Q-function. By contrast,
S-GMM-QFs trained with~\cref{algo:gmmq} drive a large
fraction of mixture weights $\xi_k(a)$ toward zero,
concentrating the model representation on a small subset
of dominant Gaussian components. The resulting effective
model complexity is comparable to that of dense GMM-QFs
with $K=20$ or $K=50$, despite starting from $K=500$
components.

This comparison reveals an important distinction between
sparsification and simply using a smaller $K$. A dense
GMM-QF with $K=20$ or $K=50$ commits to a fixed number
of components from the outset, limiting its ability to
explore the state space during early training. In contrast,
S-GMM-QFs begin with a richer pool of $K=500$ components
and let the optimization adaptively identify which ones
are most relevant, effectively pruning the rest. The
surviving components therefore tend to be better positioned
and shaped to capture the structure of the Q-function,
which explains why S-GMM-QFs can match or exceed the
performance of dense models with the same or larger $K$,
as observed in~\cref{fig:denses}.

From an interpretability standpoint, the sparsified
representation is significantly more transparent. With
only a handful of active components, each Gaussian can be
associated with a distinct region of the state space,
making its individual contribution to the Q-function
estimates readily identifiable. This is in stark contrast
to the dense $K=500$ model, where the role of any single
component is obscured by the collective contribution of
all others.

\Cref{fig:heatmap-lunar}, on the other hand, constructs additional argument for the interpretive value of
GMM-QFs.  While the random policy establishes a geometric baseline (\cref{fig:heatmap-random}): its state
visitation is unorganized and carries no meaningful information about the task, reflecting a policy with
no learned preference over the state space. The dense GMM-QFs in~\cref{fig:heatmap-dense} stands in clear
contrast. Both the induced policy's visitation and the Gaussian component centers concentrate toward the
task-relevant region, implying that the model's internal geometry directly reflects learning task
structure. Crucially, this is readable from the model itself, without requiring any post-hoc attribution
or external analysis tool.  \Cref{fig:heatmap-sparse} further shows that S-GMM-QFs induces a visitation
heatmap closely resembling that of the dense GMM-QFs, yet with far fewer active components.  More
importantly, the surviving centers remain aligned with S-GMM-QFs' own state visitation, demonstrating
that Gaussian centers faithfully track where the active state region of the induced policy, at any level
of compression.

Such geometric self-documentation is absent in DQNs and their sparse variants. Although these methods
achieve sparsity like S-GMM-QFs, their sparsification targets network connections, which lack inherent
meaning. Consequently, associating surviving connections with meaningful state-space regions requires
external explanation techniques such as saliency maps~\cite{milani24xrl}. By contrast, S-GMM-QFs with
Hadamard overparametrization eliminate this extra step: sparsification operates directly on Gaussian
components defined by explicit geometric parameters (means and covariances), so interpretability emerges
intrinsically from the sparsification process itself, requiring no post-hoc attribution. The same pattern
of component sparsification and visitation-aligned Gaussian centers is observed on the flappy bird
benchmark (\ref{sec:additional-results}), confirming that the interpretability benefit of S-GMM-QFs is
not specific to a single environment.

\subsection{Effect of replay buffers}\label{subsec:tests-buffers}
\begin{figure}[t]
    \centering
    \subfloat[Lunar lander]{
        \includegraphics[width=.45\textwidth]{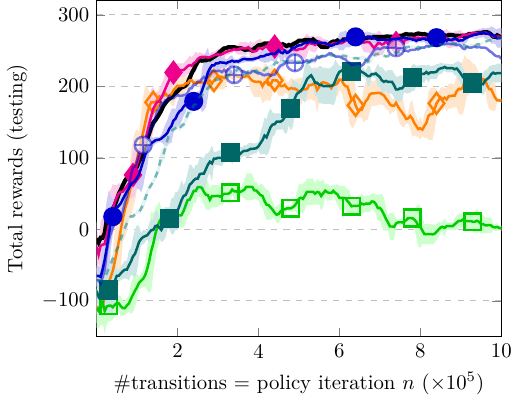}
        \label{subfig:lunarlander-buffers}
    }
    \quad
    \subfloat[Flappy bird]{
        \includegraphics[width=.45\textwidth]{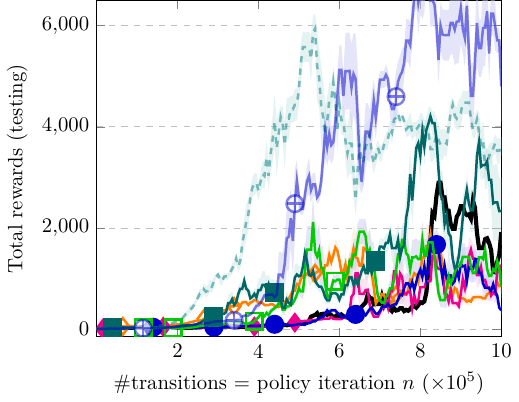}
        \label{subfig:flappy-buffers}
    }
    \caption[]{Performance of dense GMM-QFs ($K=500$) with different
    choices of experience buffer. Curve markers:
    uniform (baseline):\quickline{black}{2pt},
    fair (proposed,~\cref{subsubsec:distribution-er}):\proposed,
    PER~\cite{PER}:~\quicksymbol{orange}{mark=diamond, mark
    options={line width=1pt, scale=2}},
    ranked
    PER~\cite{PER}:\quicksymbol{magenta}{mark=diamond*, mark
    options={line width=1pt, scale=2}},
    EDER~\cite{eder}:\quicksymbol{green!80!black}{mark=square,
    mark options={line width=1pt, scale=1.5}},
    clustering
    (proposed,~\cref{subsubsec:diversity-er}):\quicksymbol{teal!80!black}{mark=square*,
    mark options={scale=1.5}}, 
    dense GMM-QFs of $K=100$ (fair
    in~\cref{subfig:lunarlander-buffers}, clustering
    in~\cref{subfig:flappy-buffers}): \tikz[baseline=-0.5ex]{
         \draw[dashed, line width=2pt, color=teal,
         opacity=0.5] (-0.3,0) -- (0.3,0);},
    sparse S-GMM-QFs of $K=500\,,J=3$ (fair, $\rho=0.05$
    in~\cref{subfig:lunarlander-buffers}; clustering, $\rho=0.0001$
    in~\cref{subfig:flappy-buffers}): \tikz[baseline=-0.5ex]{
         \draw[line width=1pt, color=proposed, 
         opacity=0.5] (-0.2,0) -- (0.2,0);
        \draw[color=proposed, opacity=0.5]
        plot[mark=quartedcircle, mark options={line width=1pt,
        scale=1.5, fill=white}] coordinates {(0,0)};
    }.
    }
    \label{fig:buffers-compare}
\end{figure}

\Cref{fig:buffers-compare} compares the six buffer/sampling
configurations on lunar lander and flappy bird benchmark. For
consistency, importance-sampling (IS) weighting is disabled
across all buffer variants compared here, including standard
and rank-based PER, so that the reported differences in
performance for dense GMM-QFs reflect the sampling
distribution alone rather than the presence or absence of
IS-weight correction. On
lunar lander~\cref{subfig:lunarlander-buffers}, the uniform
baseline and the proposed fair decay scheme track each other
almost exactly, rising fastest among all curves and
converging to the high plateau; while rank PER~\cite{PER}
follows closely behind. Proportional PER~\cite{PER} and
clustering (proposed) converge to a visibly sub-optimal
behavior, with clustering buffer approaching this level only
in later stages of training, while EDER~\cite{eder} fails to
score desired performance. Notably, the fair scheme paired
with a sparse S-GMM-QF ($K=500\,,J=3\,,\rho=0.05$) tracks the
uniform baseline and its dense counterpart in the later
stages of training, despite relying on far fewer effective
(nonzero-weight) components. This suggests that, in a task
with dense, well-shaped rewards, distribution-based
prioritization already exploits the buffer efficiently,
leaving comparatively little room for diversity-based
sampling to improve upon, and Hadamard sparsification
preserves this efficiency at a fraction of the model size.

On the other hand, for the flappy bird benchmark
(\cref{subfig:flappy-buffers}), the ordering changes
substantially: with $K=500$, fair (proposed), PER
(proportional, rank)~\cite{PER}, and EDER~\cite{eder} all
remain flat and mutually comparable, plateauing well below a
desired cumulative reward (\num{2000}) with significant
variance; while the uniform baseline and dense clustering
break away in the later stage of training. The proposed
clustering buffer paired with a leaner dense GMM-QF
($K=100$) improves further still, outperforming all these
configurations for most of the training horizon. Strikingly,
pairing clustering with the sparse S-GMM-QF
($K=500\,,J=3\,,\rho=0.0001$) yields the best performance of
all, rising earliest and ultimately surpassing even the
$K=100$ dense clustering variant. This indicates that
sparsification is not merely matching what a smaller
fixed-$K$ model achieves, but exceeding it: by starting from a
large pool of $K=500$ candidate Gaussians and adaptively
pruning them alongside diversity-driven sampling, S-GMM-QFs
retain the flexibility to explore a delayed, highly redundant
reward landscape early in training while still converging to
a compact, effective representation---a combination
unavailable to a model whose component count is fixed a
priori. This observation is consistent with the earlier
finding that a dense model of $K=500$ Gaussian components is
substantially less stable than the smaller $K=50$ and $K=100$
models in the flappy bird task, and further
suggests that adaptive sparsification, rather than simply
shrinking $K$, is the appropriate remedy. We also note that
the regularization strength required to realize this benefit
differs sharply between tasks ($\rho=0.05$ on lunar lander
versus $\rho=0.0001$ on flappy bird), reflecting flappy bird's
greater reliance on retaining representational capacity even
as redundant components are pruned. Notably, the
distribution-based strategies, including the proposed fair
buffer scheme, fail to match even the uniform baseline,
suggesting that TD-driven prioritization offers little
benefit, and possibly even a mild disadvantage, when the
transitions are redundant with delayed reward effect.

Taken together, the results indicate that no single buffer
strategy dominates universally; distribution-based replay
buffer is effective on dense-reward setting, while
diversity-based clustering is essential on flappy bird's
delayed-reward settings. Moreover, this benefit compounds
with Hadamard sparsification: rather than a mere
model-compression device, sparsification via S-GMM-QFs acts
as a complementary mechanism to diversity-based experience
replay, jointly delivering the best performance observed on
the more challenging flappy bird benchmark while remaining competitive,
at a fraction of the effective parameters, on the easier
lunar lander.

\section{Conclusions}\label{sec:conclusion}


This paper introduced sparse Gaussian-mixture-model Q-functions (S-GMM-QFs), extending prior offline work
into an online, off-policy policy-iteration framework. Streaming transitions were accumulated into an
experience buffer, with several buffer structures studied to balance exploration and sample
efficiency. Parameters were updated via Riemannian optimization on a smooth manifold structure,
respecting the geometric constraints of the parameter space.

Model complexity was controlled through sparsification via Hadamard overparametrization, enabling smooth
regularization compatible with Riemannian-based optimization rather than non-smooth $\ell_p$-norm
penalties. The proposed approach differs fundamentally from simply choosing a smaller number of
components: a model initialized with a large pool retained flexibility to explore the state space early
in training, then progressively concentrated on meaningful components. This adaptive selection yielded
interpretability naturally: each surviving component's parameters (means and covariances) explicitly
encoded its geometric role in the ambient state-action space, providing transparency without the post-hoc
explanation tools typically required by sparse deep RL.

Numerical tests on standard RL benchmarks demonstrated that S-GMM-QFs matched or exceeded competing
methods with faster improvement per observed transition. This advantage persisted and often widened in
low-parameter regimes where sparse deep RL approaches degraded substantially.

The current framework is suited to moderate state dimensions but becomes problematic for raw sensory
observations such as images or LiDAR scans: per-component matrix inversion and Riemannian retraction
scale with cubic-order computational complexity ($\mathcal{O}(D_s^3)$), becoming prohibitive as state
dimension grows. These limitations, together with the restriction to discrete-action problems, suggest
clear directions for future work. The Q-function's role as a critic extends naturally to
continuous-action settings via actor-critic frameworks, where preliminary experiments with GMM value
functions, trained under the same Riemannian machinery, already exhibit stable
learning. State-representation learning offers a path to addressing the dimensionality constraint. Both
extensions are under active development.

\appendix
\crefalias{section}{appendix}

\section{Proof
of~\cref{prop:gradients}}\label{prop:gradients.proof}

To simplify notation in the following proofs,
$\mathscr{G}(\cdot \given \vect{m}_k, \vect{C}_k)$ will be
written as $\mathscr{G}_k(\cdot)$ or
$\mathscr{G}_{\vect{C}_k}$. The dataset $\mathcal{D}_n$, as
well as the current estimate $Q_n$, will
be suppressed in the loss notation. The derivation
of $\partial \mathscr{R}(\vectgr{\Omega}) / \partial
\vectgr{\Upsilon}_j$ is straightforward and therefore skipped.

For convenience, let $\xi_k(a) \coloneqq \prod_{j=1}^{J}
\upsilon_{k, j}(a)\,,\forall a \in \mathfrak{A}$. Thus, we
have the following:
\begin{equation*}
    L_{\mu_n}(\vectgr{\Omega}) \coloneqq \frac{1}{T}
    \sum_{t=1}^{T} \bigg[
    \sum_{k=1}^{K} \xi_k(a_t) \mathscr{G}_k(\vect{s}_t) -
    r_t - \alpha Q_n (\vect{s}_t', \mu_n(\vect{s}_t))
    \bigg]^2
    \\
    = \frac{1}{T} \sum_{t=1}^{T} \delta_t^2
\end{equation*}

\subsection{Derivation of~\eqref{eq:dL.dUpsilon}}
First, let us consider $\delta_t(\vectgr{\Omega}) \coloneqq
\sum_{k=1}^{K} \xi_k(a_t) \mathscr{G}_k(\vect{s}_t) -
r_t - \alpha Q_n (\vect{s}_t', \mu_n(\vect{s}_t))$.
It is trivial that, $\partial \delta_t (\vectgr{\Omega})/ \partial \xi_k(a_t)
= \mathscr{G}_k(\vect{s}_t)$, and $\partial \xi_k(a_t) /
\partial \upsilon_{k, j}(a_t) = \prod_{j'=1}^{J}
\upsilon_{k, j'}(a_t) / \upsilon_{k, j}(a_t)$. Applying the
standard chain rule,
\begin{align*}
    \frac{\partial L_{\mu_n}}{\partial \upsilon_{k,j}(a_t)}
    (\vectgr{\Omega}) & = \frac{1}{T} \sum_{t=1}^{T} \frac{\partial}{\partial
    \upsilon_{k,j}(a_t)} \delta_t^2(\vectgr{\Omega})
    = \frac{1}{T} \sum_{t=1}^{T} 2\delta_t \frac{\partial
    \delta_t}{\partial
    \upsilon_{k,j}(a_t)} (\vectgr{\Omega}) \\
    & = \frac{1}{T} \sum_{t=1}^{T} 2\delta_t \frac{\partial
    \xi_k(a_t)}{ \partial \upsilon_{k,j}(a_t)} \frac{\partial
    \delta_t}{\partial
    \xi_k(a_t) } (\vectgr{\Omega})
    = \frac{1}{T} \sum_{t=1}^{T} 2\delta_t \frac{\prod_{j'=1}^{J}
    \upsilon_{k, j'}(a_t)}{\upsilon_{k, j}(a_t)}
    \mathscr{G}_k(\vect{s}_t) \,,
\end{align*}
establishing~\eqref{eq:dL.dUpsilon}.

\subsection{Derivation of~\eqref{eq:dL.dM}}
We now consider $\partial \delta_t (\vectgr{\Omega})/ \partial \vect{m}_k$:
\begin{align*}
    \frac{\partial \delta_t}{\partial \vect{m}_k}
    (\vectgr{\Omega}) =
    \xi_k(a_t) \frac{\partial \mathscr{G}_k(\vect{s}_t)}{\partial
    \vect{m}_k} = \xi_k(a_t) 2 \mathscr{G}_k(\vect{s}_t)
    \vect{C}_k^{-1} (\vect{s}_t - \vect{m}_k) \,.
\end{align*}
Applying again the chain rule,
\begin{align*}
    \frac{\partial L_{\mu_n}}{\partial \vect{m_k}}
    (\vectgr{\Omega}) = \sum_{t=1}^{T} 2\delta_t
    \frac{\partial \delta_t}{\partial \vect{m}_k}
    (\vectgr{\Omega}) =
    \sum_{t=1}^{T} 4 \delta_t \xi_k(a_t) \mathscr{G}_k(\vect{s}_t)
    \vect{C}_k^{-1} (\vect{s}_t - \vect{m}_k) \,,
\end{align*}
which establishes~\eqref{eq:dL.dM}.
\subsection{Derivation of~\eqref{eq:dL.dC}}
The (partial) derivative of $\delta_t(\vectgr{\Omega})$ with
respect to $\vect{C}_k$, at the point $\vectgr{\Omega}$ and
along an arbitrarily fixed direction $\vect{X} \in
T_{\vect{C}_k} \PD^{D_s} = \mathbb{S}^{D_S}$, together with
the chain rule of differentiation and the fact that the
derivative of the inverse matrix function $\inv: \PD^{D_s}
\to \PD^{D_s}: \vect{C} \mapsto \inv(\vect{C}) \coloneqq
\vect{C}^{-1}$ along $\vect{X}$ is $\Df
\inv(\vect{C})[\vect{X}] = -\vect{C}^{-1} \vect{X}
\vect{C}^{-1}$, yields
\begin{align*}
    \Df_{\vect{C}_k} \delta_t(\vectgr{\Omega})[\vect{X}] 
    & =
    \xi_k(a_t) \mathscr{G}_k(\vect{s}_t) (\vect{s}_t -
    \vect{m}_k)^{\intercal} \Df \inv(\vect{C}_k)[\vect{X}]
    (\vect{s}_t - \vect{m}_k) \\
    & = 
    \xi_k(a_t) \mathscr{G}_k(\vect{s}_t) (\vect{s}_t -
    \vect{m}_k)^{\intercal} \vect{C}_k^{-1} \vect{X}
    \vect{C}_k^{-1}
    (\vect{s}_t - \vect{m}_k) \\
    & = 
    \xi_k(a_t) \trace \bigg( 
    \vect{C}_k^{-1} \mathscr{G}_k(\vect{s}_t) (\vect{s}_t -
    \vect{m}_k)^{\intercal} (\vect{s}_t - \vect{m}_k)
    \vect{C}_k^{-1} \vect{X} 
    \bigg) \\
    & = \trace \bigg( 
    \vect{C}_k^{-1} \xi_k(a_t) \mathscr{G}_k(\vect{s}_t) (\vect{s}_t -
    \vect{m}_k)^{\intercal} (\vect{s}_t - \vect{m}_k)
    \vect{C}_k^{-1} \vect{X}
    \bigg) \,,
\end{align*}
then
\begin{align*}
    \Df_{\vect{C}_k} L_{\mu_n}(\vectgr{\Omega})[\vect{X}] =
    \frac{1}{T}\sum_{t=1}^{T} 2\delta_t(\vectgr{\Omega}) \xi_k(a_t)
    \trace \bigg( \vect{C}^{-1}_k
    \mathscr{G}_k(\vect{s}_t)
    (\vect{s}_t - \vect{m}_k)^\intercal (\vect{s}_t -
    \vect{m}_k) \vect{C}_k^{-1} \vect{X}
    \bigg) \,.
\end{align*}
Recall now the following identity connecting the gradient
with the derivative~\cite[Appendix
A.5]{Absil:OptimManifolds:08}:
\begin{align}
    \innerp{\frac{\partial L_{\mu_n}}{\partial
    \vect{C}_k}}{\vect{X}}_{\vect{C}_k} &=
    \Df_{\vect{C}_k} L_{\mu_n}(\vectgr{\Omega})[\vect{X}]
    \notag \\
    \Rightarrow \innerp{\frac{\partial L_{\mu_n}}{\partial
    \vect{C}_k}}{\vect{X}}_{\vect{C}_k} &=
    \frac{1}{T}\sum_{t=1}^{T} 2\delta_t(\vectgr{\Omega}) \xi_k(a_t)
    \trace \bigg( \vect{C}^{-1}_k
    \mathscr{G}_k(\vect{s}_t)
    (\vect{s}_t - \vect{m}_k)^\intercal (\vect{s}_t -
    \vect{m}_k) \vect{C}_k^{-1} \vect{X}
    \bigg) \,, \label{eq:deriv.innerp}
\end{align}
where $\innerp{\cdot}{\cdot}_{\vect{C}_k}$ stands for the
adopted Riemannian metric. This paper employs the
affine-invariant~\cite{Pennec:Riemannian:19},~\eqref{eq:deriv.innerp}
yields
\begin{equation*}
    \trace\bigg( \vect{C}^{-1}_k \frac{\partial
    L_{\mu_n}}{\partial \vect{C}_k} \vect{C}^{-1}_k
    \vect{X} \bigg) = \frac{1}{T}\sum_{t=1}^{T}
    2\delta_t(\vectgr{\Omega}) \xi_k(a_t)
    \trace \bigg( \vect{C}^{-1}_k
    \mathscr{G}_k(\vect{s}_t)
    (\vect{s}_t - \vect{m}_k)^\intercal (\vect{s}_t -
    \vect{m}_k) \vect{C}_k^{-1} \vect{X}
    \bigg) \,,
\end{equation*}
and because $\vect{X} \in T_{\vect{C}_k} \PD^{D_s}$ is
chosen arbitrarily,
\[
    \frac{\partial
    L_{\mu_n}}{\partial \vect{C}_k} (\vectgr{\Omega})= \frac{1}{T}
    \sum_{t=1}^{T} 2\delta_t(\vectgr{\Omega}) \xi_k(a_t)
    \vect{B}_{tk} \,,
\]
with $\vect{B}_{tk}$ defined in~\cref{prop:gradients},
establishing~\eqref{eq:dL.dC}.

\section{Additional Results}\label{sec:additional-results}
This appendix collects supplementary results and
reproducibility details supporting~\cref{sec:tests}.

\subsection{Interpretability on the flappy bird
benchmark}\label{subsec:appendix-interpret-flappy}
This subsection reports, on the flappy
bird benchmark, the counterparts of the Gaussian-component
distributions and state-visitation heatmaps discussed for
lunar lander
(\cref{fig:visual-dense-gmm-flappy,fig:visual-sparse-gmm-flappy,fig:heatmap-flappy}):
these corroborate the same interpretability claims made
via~\cref{fig:visual-dense-gmm,fig:visual-sparse-gmm,fig:heatmap-lunar}
without introducing new observations, and the corresponding
discussion in~\cref{subsec:tests-interpret} applies
throughout.

\begin{figure}[t]
    \centering
    \begin{minipage}{\textwidth}
    \centering
    \subfloat[Dense $K=500$]{
        \includegraphics[
        width=.2\textwidth]{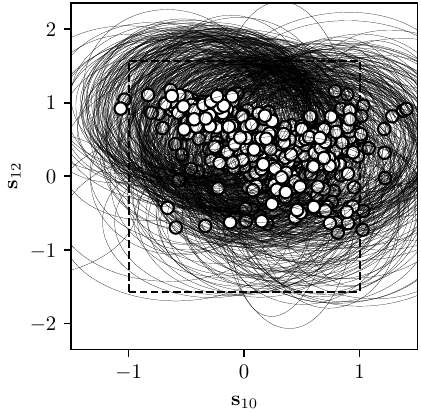}
    }
    \quad
    \subfloat[Dense $K=100$]{
        \includegraphics[
        width=.2\textwidth]{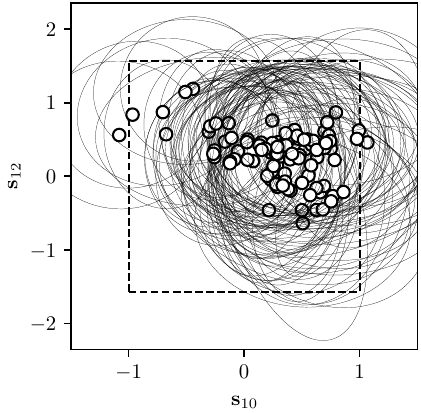}
    }
    \quad
    \subfloat[Dense $K=50$]{
        \includegraphics[
        width=.2\textwidth]{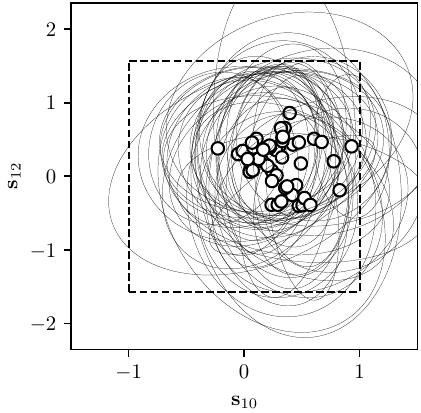}
    }
    \quad
    \subfloat[Dense $K=20$]{
        \includegraphics[
        width=.2\textwidth]{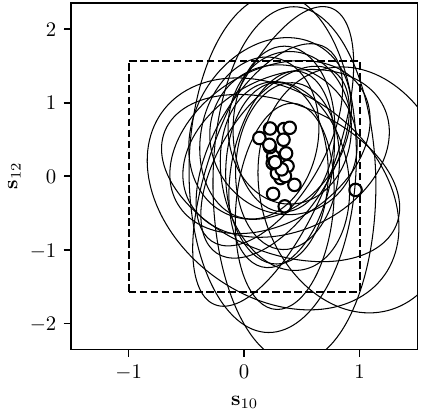}
    }
    \caption[]{Distribution of Gaussian components in dense
        GMM-QFs on flappy bird
        benchmark, shared across all actions. Visualization setting follows ones
        of~\cref{fig:visual-dense-gmm}.
        }
    \label{fig:visual-dense-gmm-flappy}
    \end{minipage}

    \vspace{2em}

    \begin{minipage}{\textwidth}
    \centering
    \subfloat[Action 1 (8 Gaussians)]{
        \includegraphics[
        width=.2\textwidth]{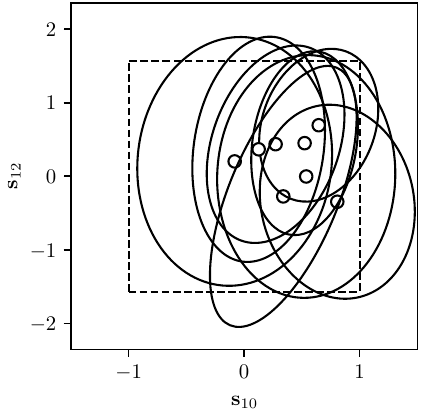}
    }
    \quad
    \subfloat[Action 2 (6 Gaussians)]{
        \includegraphics[
        width=.2\textwidth]{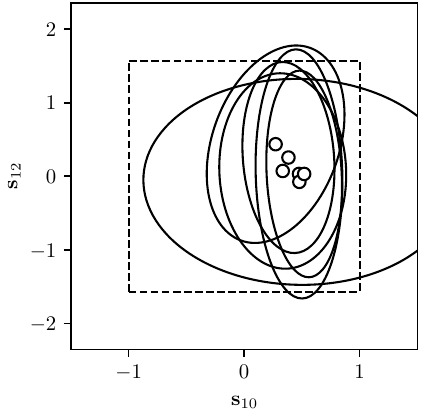}
    }
    \caption[]{Distribution of Gaussian components in
        S-GMM-QFs on flappy bird
        benchmark, sparsified from initial $500$ Gaussian components
        via Hadamard overparametrization ($J=3\,,\rho=10^{-4}$).
        Each action induces a distinct distribution of
        active Gaussian
        components.
        Visualization setting follows ones
        of~\cref{fig:visual-dense-gmm}.
    }
    \label{fig:visual-sparse-gmm-flappy}
    \end{minipage}
\end{figure}

\begin{figure}[t]
    \centering
    \centering
    \subfloat[Random policy]{
        \includegraphics[width=.3\textwidth]{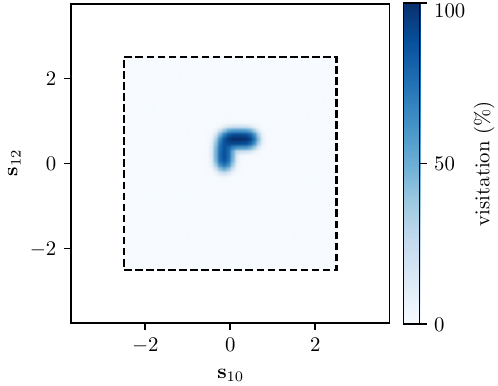}\label{fig:heatmap-random-flappy}
    }
    \quad
    \subfloat[Dense GMM-QFs ($K=100$)]{
        \includegraphics[width=.3\textwidth]{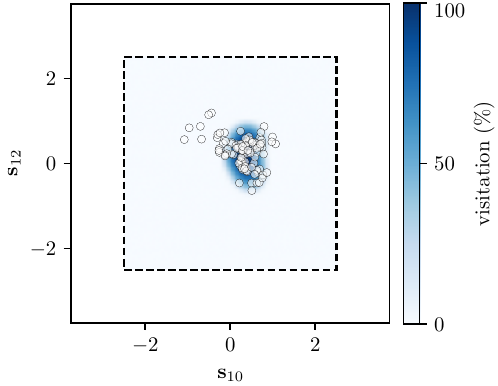}\label{fig:heatmap-dense-flappy}
    }
    \quad
    \subfloat[S-GMM-QFs ($K=500\,,J=3\,,\rho=10^{-4}$)]{
        \includegraphics[width=.3\textwidth]{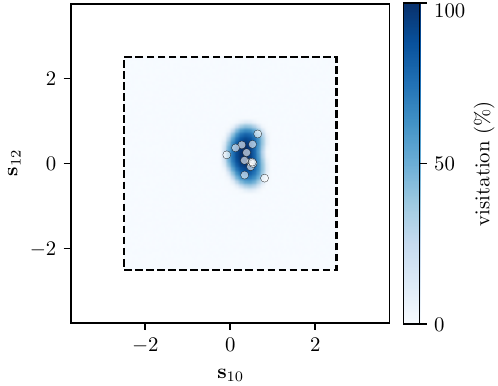}\label{fig:heatmap-sparse-flappy}
    }
    \caption[]{State visitation density and distribution of
    Gaussian component centers under three conditions of
    Q-function on
    flappy bird benchmark, projected onto vertical
    position and velocity. Heatmaps are derived from trajectories
    induced from the corresponding policies.
    }
    \label{fig:heatmap-flappy}
\end{figure}

\section{Hyperparameters of the DeepRL
baselines}\label{subsec:appendix-deeprl-settings}
\Cref{tab:hyperparams-deeprl} lists the hyperparameters of
the DeepRL baselines and their sparsified variants used
in~\cref{sec:tests}, following the settings benchmarked
by~\cite{sparseDRL}.

\begin{table}[!tp]
    \centering
    \caption{Hyperparameters of the DeepRL baselines.}
    \label{tab:hyperparams-deeprl}
    \begin{tabular}{l>{\raggedright\arraybackslash}p{.38\textwidth}}
        \toprule
        Hyperparameter & Value \\
        \midrule
        \multicolumn{2}{l}{\textit{Network architecture}} \\
        DQN Q-network (hidden layers $\times$ width)
        & $2\times \{512, 256, 128, 64\}$ \\
        PPO critic network (hidden layers $\times$ width) &
        $2\times\{512, 256, 128, 64\}$\\
        PPO actor network (hidden layers $\times$ width) &
        $2\times 64$ \\
        \midrule
        \multicolumn{2}{l}{\textit{Optimization}} \\
        Optimizer & Adam \\
        DQN learning rate & $10^{-3}$ \\
        PPO learning rate & $3\times 10^{-4}$ \\
        Batch size & 64 \\
        Buffer capacity (DQN)& $10^5$ \\
        Rollout length (PPO) & $2048$ \\
        \midrule
        \multicolumn{2}{l}{\textit{DQN}} \\
        Priority exponent~\cite{PER} $\alpha_{\text{PER}}$ & $0.6$ \\
        IS-weight exponent~\cite{PER} $\beta_{\text{PER}}$ & $0.4$, annealed
        to $1$ (increment $10^{-4}$ per update) \\
        \midrule
        \multicolumn{2}{l}{\textit{Sparsified DeepRL variants}} \\
        Sparsified networks (PPO variants) & critic only \\
        Sparsity level &
        $\{0.9, 0.99, 0.999\}$ \\
        Pruning schedule (dense-to-sparse)~\cite{dense2sparse} &
        cubic, global magnitude, $20\%$--$80\%$ of training,
        every $500$ steps \\
        Drop fraction, cosine decay~\cite{cosine}
        (SET~\cite{mocanu17scalable}, RiGL~\cite{rigl}) &
        $0.5$, annealed to $0$ at $80\%$ of training,
        every $500$ steps \\
        Sparsity distribution (SET~\cite{mocanu17scalable},
        RiGL~\cite{rigl}) & Erd\H{o}s-R\'{e}nyi kernel (ERK) \\
        \bottomrule
    \end{tabular}
\end{table}

\singlespacing
\setlength{\bibsep}{0pt}
\footnotesize
\printbibliography

\end{document}

%% file: MinhPreamble.tex
\usepackage[T1]{fontenc}

\usepackage{calrsfs}
\usepackage{amsmath, amsthm, accents, amssymb, amsfonts, bm,
  mathtools, cases}
\usepackage[cal = boondox,%
  calscaled = 1.1,%
  bb = ams,%
  bbscaled = .94,%
  frakscaled = .97,%
  scr = esstix]{mathalpha}

\usepackage[linesnumbered,vlined,ruled,commentsnumbered]{algorithm2e}
\SetKwInput{Require}{Require}
\SetAlCapHSkip{0pt}                

\usepackage[inline]{enumitem}
\usepackage{graphicx}
\usepackage{wrapfig}

\usepackage{caption}
\usepackage{subcaption}
\usepackage{hyperref}
\hypersetup{hidelinks}

\usepackage[capitalize]{cleveref}
\usepackage{url}
\usepackage{booktabs}
\usepackage{array}
\usepackage{multirow}

\usepackage{textcomp}

\usepackage{etoolbox}
\def\BibTeX{{\rm B\kern-.05em{\sc i\kern-.025em b}\kern-.08em
    T\kern-.1667em\lower.7ex\hbox{E}\kern-.125emX}}

\usepackage{siunitx}
\usepackage[mathlines, switch]{lineno}

\usepackage[textsize=footnotesize]{todonotes}

\usepackage{setspace}

\newcommand\given{{\mathbin{}\mid\mathbin{}}}
\newcommand{\IntegerP}{\mathbb{N}}
\newcommand{\IntegerPP}{\mathbb{N}_*}
\newcommand{\Real}{\mathbb{R}}

\newcommand{\PD}{\mathbb{S}_{++}}
\newcommand\SetSymbol[1][]{
  \nonscript\,#1\vert \allowbreak \nonscript\,\mathopen{}}
\DeclarePairedDelimiterX\Set[1]{\lbrace}{\rbrace}%
{ \renewcommand\given{\SetSymbol[\delimsize]} #1 }
\DeclarePairedDelimiterX\norm[1]\lVert\rVert{\ifblank{#1}{\:\cdot\:}{#1}}
\DeclarePairedDelimiterX\innerp[2]{\langle}{\rangle}{#1
  \mathop{}\delimsize\vert\mathop{} #2}

\newcommand\vect[1]{\mathbf{#1}}
\newcommand\vectgr[1]{\bm{#1}}

\newcommand{\Banach}{\mathcal{B}}
\newcommand{\Bellman}{\mathbfcal{T}}

\DeclareMathOperator{\Fix}{Fix}

\DeclareMathOperator*{\Argmin}{arg\,min}

\DeclareMathOperator*{\Argmax}{arg\,max}

\DeclareMathOperator{\Df}{D}
\DeclareMathOperator{\trace}{tr}

\DeclareMathOperator{\Exp}{Exp}
\DeclareMathOperator{\Log}{Log}

\DeclareMathOperator{\grad}{grad}
\DeclareMathOperator{\rank}{rank}
\DeclareMathOperator{\inv}{inv}

\newtheorem{theorem}{Theorem}

\newtheorem{proposition}[theorem]{Proposition}

\crefname{theorem}{Theorem}{Theorems}
\crefname{lemma}{Lemma}{Lemmata}
\crefname{proposition}{Proposition}{Propositions}
\crefname{assumption}{Assumption}{Assumptions}
\crefname{assumptions}{Assumptions}{Assumptions}
\crefname{algo}{Algorithm}{Algorithms}
\crefname{fact}{Fact}{Facts}
\crefname{claim}{Claim}{Claims}
\crefname{corrolary}{Corollary}{Corollaries}

\newlist{thmlist}{enumerate}{1}
\setlist[thmlist]{label=\textbf{(\roman{*})}, ref=\thetheorem(\roman{*}), noitemsep}

\newlist{lemlist}{enumerate}{1}
\setlist[lemlist]{label=\textbf{(\roman{*})}, ref=\thelemma(\roman{*}), noitemsep}

\newlist{exlist}{enumerate}{1}
\setlist[exlist]{label=\textbf{(\roman{*})}, ref=\theexample(\roman{*}), noitemsep}

\newlist{factlist}{enumerate}{1}
\setlist[factlist]{label=\textbf{(\roman{*})}, ref=\thefact(\roman{*}), noitemsep}

\newlist{proplist}{enumerate}{1}
\setlist[proplist]{label=\textbf{(\roman{*})},
ref=\theproposition(\roman{*}), noitemsep}

\newlist{asslist}{enumerate}{1}
\setlist[asslist]{label=\textbf{(\roman{*})},
ref=\theassumption(\roman{*}), noitemsep}

\newlist{assslist}{enumerate}{1}  
\setlist[assslist]{label=\textbf{(\roman{*})},
ref=\theassumptions(\roman{*}), noitemsep}

\newlist{deflist}{enumerate}{1}  
\setlist[deflist]{label=\textbf{(\roman{*})},
ref=\thedefinition(\roman{*}), noitemsep}

\newlist{algolist}{enumerate}{1}
\setlist[algolist]{label=\textbf{(\roman{*})}, ref=\thealgo(\roman{*}), noitemsep}

\newlist{claimlist}{enumerate}{1}
\setlist[claimlist]{label=\textbf{(\roman{*})},
ref=\theclaim(\roman{*}), noitemsep}

\crefname{thmlisti}{Theorem}{Theorems}
\crefname{lemlisti}{Lemma}{Lemmata}
\crefname{proplisti}{Proposition}{Propositions}
\crefname{asslisti}{Assumption}{Assumptions}
\crefname{assslisti}{Assumption}{Assumptions}
\crefname{deflisti}{Definition}{Definitions}
\crefname{exlisti}{Example}{Examples}
\crefname{algolisti}{Algorithm}{Algorithms}
\crefname{factlisti}{Fact}{Facts}
\crefname{claimlisti}{Claim}{Claims}

\crefname{figure}{Figure}{Figures}
\crefname{section}{Section}{Sections}

\allowdisplaybreaks

\biboptions{sort&compress}

\makeatletter
\let\c@author\relax
\makeatother
\usepackage[%
  backend = biber,%
  maxbibnames = 10,%
  minbibnames = 2,%
  style = ieee,%
  citestyle = numeric-comp,%
  bibencoding = utf8,%
  defernumbers = true,%
  sortcites=true,%
  natbib=true,%
  doi=true,%
  isbn=false,%
  url=false,%
  eprint=true%
]{biblatex}

%% file: TikzPreamble.tex
\usepackage{tikz}
\usepackage{xcolor}
\usepackage{pgfplots}
\pgfplotsset{compat=newest}
\usepgflibrary{fillbetween}
\usetikzlibrary{angles, arrows, arrows.meta, shapes,
tikzmark, patterns, shapes.geometric}

\pgfdeclareplotmark{halftriangle}{
  \newdimen\dividerwidth
  \dividerwidth=0.6\pgflinewidth
  \newdimen\halfdivider
  \halfdivider=.2\dividerwidth
  \begin{pgfscope}
    \pgfpathmoveto{\pgfpoint{0pt}{\pgfplotmarksize}}
    \pgfpathlineto{\pgfpoint{-\pgfplotmarksize}{-\pgfplotmarksize}}
    \pgfpathlineto{\pgfpoint{\pgfplotmarksize}{-\pgfplotmarksize}}
    \pgfpathclose
    \pgfusepath{clip}
    \pgfpathmoveto{\pgfpoint{-\halfdivider}{\pgfplotmarksize}}
    \pgfpathlineto{\pgfpoint{-\pgfplotmarksize}{-\pgfplotmarksize}}
    \pgfpathlineto{\pgfpoint{-\halfdivider}{-\pgfplotmarksize}}
    \pgfpathclose
    \pgfusepath{fill}
    \pgfsetfillcolor{white}
    \pgfpathmoveto{\pgfpoint{\halfdivider}{\pgfplotmarksize}}
    \pgfpathlineto{\pgfpoint{\pgfplotmarksize}{-\pgfplotmarksize}}
    \pgfpathlineto{\pgfpoint{\halfdivider}{-\pgfplotmarksize}}
    \pgfpathclose
    \pgfusepath{fill}
  \end{pgfscope}
  \pgfpathmoveto{\pgfpoint{0pt}{\pgfplotmarksize}}
  \pgfpathlineto{\pgfpoint{-\pgfplotmarksize}{-\pgfplotmarksize}}
  \pgfpathlineto{\pgfpoint{\pgfplotmarksize}{-\pgfplotmarksize}}
  \pgfpathclose
  \pgfusepath{stroke}
  \begin{pgfscope}
    \pgfsetlinewidth{\dividerwidth}
    \pgfpathmoveto{\pgfpoint{0pt}{\pgfplotmarksize}}
    \pgfpathlineto{\pgfpoint{0pt}{-\pgfplotmarksize}}
    \pgfusepath{stroke}
  \end{pgfscope}
}

\pgfdeclareplotmark{quartedcircle}{
  \begin{pgfscope}
    \pgfpathcircle{\pgfpointorigin}{\pgfplotmarksize}
    \pgfusepath{clip}
    \pgfpathcircle{\pgfpointorigin}{\pgfplotmarksize}
    \pgfusepath{fill}
    \pgfsetfillcolor{white}
    \pgfpathmoveto{\pgfpointorigin}
    \pgfpathlineto{\pgfpoint{\pgfplotmarksize}{0pt}}
    \pgfpathlineto{\pgfpoint{\pgfplotmarksize}{\pgfplotmarksize}}
    \pgfpathlineto{\pgfpoint{0pt}{\pgfplotmarksize}}
    \pgfpathclose
    \pgfusepath{fill}
    \pgfpathmoveto{\pgfpointorigin}
    \pgfpathlineto{\pgfpoint{-\pgfplotmarksize}{0pt}}
    \pgfpathlineto{\pgfpoint{-\pgfplotmarksize}{-\pgfplotmarksize}}
    \pgfpathlineto{\pgfpoint{0pt}{-\pgfplotmarksize}}
    \pgfpathclose
    \pgfusepath{fill}
  \end{pgfscope}
  \pgfpathcircle{\pgfpointorigin}{\pgfplotmarksize}
  \pgfusepath{stroke}
  \pgfpathmoveto{\pgfpoint{-\pgfplotmarksize}{0pt}}
  \pgfpathlineto{\pgfpoint{\pgfplotmarksize}{0pt}}
  \pgfpathmoveto{\pgfpoint{0pt}{-\pgfplotmarksize}}
  \pgfpathlineto{\pgfpoint{0pt}{\pgfplotmarksize}}
  \pgfusepath{stroke}
}

\colorlet{dqn}{green!50!black}
\newcommand{\dqn}{
    \tikz[baseline=-0.5ex]{
        \draw[line width=1pt, color=dqn] (-0.2, 0) -- (0.2, 0);
        \draw[color=dqn] plot[mark size=3pt, mark=square*,
        mark options={line width=1pt}]
        coordinates {(0,0)};
    }
}
\colorlet{duelddqn}{teal!50!black}

\colorlet{ppo}{violet!80!black}
\newcommand{\ppo}{
    \tikz[baseline=-0.5ex]{
        \draw[line width=1pt, color=ppo] (-0.2, 0) -- (0.2, 0);
        \draw[color=ppo] plot[mark size=3pt, mark=triangle*,
        mark options={line width=1pt}]
        coordinates {(0,0)};
    }
}

\colorlet{proposed}{blue!80!black}
\newcommand{\proposed}{
    \tikz[baseline=-0.5ex]{
        \draw[line width=1pt, color=proposed] (-0.2, 0) -- (0.2, 0);
        \draw[color=proposed] plot[mark size=3pt, mark=*,
        mark options={line width=1pt}]
        coordinates {(0,0)};
    }
}

\colorlet{klspi}{orange!80!black}

\colorlet{obr}{magenta!80!black}

\colorlet{emgmm}{red!80!black}

\newcommand{\quicksymbol}[2]{
    \tikz[baseline=-0.5ex]{
        \draw[line width=1pt, color=#1] (-0.2, 0) -- (0.2, 0);
        \draw[color=#1] plot[#2]
        coordinates {(0,0)};
    }
}
\newcommand{\quickline}[2]{
    \tikz[baseline=-0.5ex]{
        \draw[line width=#2, color=#1] (-0.2,0) -- (0.2,0);
    }
}

\newcommand{\quickstyledline}[3]{
    \tikz[baseline=-0.5ex]{
        \draw[line width=#2, color=#1, #3]
        (-0.2,0)--(0.2,0);
    }
}

\newcommand{\dotsymbol}[2]{
    \tikz[baseline=-0.5ex]{
        \draw[color=#1] plot[#2] coordinates {(0,0)};
    }
}
